\newtheorem{theorem}{Theorem}
\newtheorem{lemma}{Lemma}
\title{FormNet: Structural Encoding beyond Sequential Modeling in \\ Form Document Information Extraction}
\author{
Chen-Yu Lee\textsuperscript{$\dagger$}, 
Chun-Liang Li\textsuperscript{$\dagger$},
Timothy Dozat\textsuperscript{$\ddagger$},
Vincent Perot\textsuperscript{$\ddagger$},
Guolong Su\textsuperscript{$\ddagger$}, \\
\textbf{
Nan Hua\textsuperscript{$\ddagger$},
Joshua Ainslie\textsuperscript{$\ddagger$},
Renshen Wang\textsuperscript{$\ddagger$},  
Yasuhisa Fujii\textsuperscript{$\ddagger$},
Tomas Pfister\textsuperscript{$\dagger$}} \\
\textsuperscript{$\dagger$}Google Cloud AI Research, \textsuperscript{$\ddagger$}Google Research \\
\texttt{\footnotesize{\{chenyulee, chunliang, tdozat, vperot, gsu, }} \\
\texttt{\footnotesize{nhua, jainslie, rewang, yasuhisaf, tpfister\}}@google.com} \\
}
\begin{document}
\maketitle
\begin{abstract}
Sequence modeling has demonstrated state-of-the-art performance on natural language and document understanding tasks. 
However, it is challenging to correctly serialize tokens in form-like documents in practice due to their variety of layout patterns. We propose FormNet, a structure-aware sequence model to mitigate the suboptimal serialization of forms.
First, we design \emph{Rich Attention} that leverages the spatial relationship between tokens in a form for more precise attention score calculation. Second, we construct \emph{Super-Tokens} for each word by embedding representations from their neighboring tokens through graph convolutions. FormNet therefore explicitly recovers local syntactic information that may have been lost during serialization. In experiments, FormNet outperforms existing methods with a more compact model size and less pre-training data, establishing new state-of-the-art performance on CORD, FUNSD and Payment benchmarks.
\end{abstract}

\section{Introduction}
Form-like document understanding is a surging research topic because of its practical applications in automating the process of extracting and organizing valuable text data sources such as marketing documents, advertisements and receipts.

Typical documents are represented using natural languages; understanding articles or web content~\cite{antonacopoulos2009realistic,luong2012logical,soto2019visual} has been studied extensively.
However, form-like documents often have more complex layouts that contain structured objects, such as tables and columns. 
Therefore, form documents have unique challenges compared to natural language documents stemming from their structural characteristics, and have been largely under-explored.

In this work, we study critical information extraction from form documents, which is the fundamental subtask of form document understanding. 
Following the success of sequence modeling in natural language understanding (NLU), a natural approach to tackle this problem is to first serialize the form documents and then apply state-of-the-art sequence models to them. 
For example, \citet{palm2017cloudscan} use Seq2seq~\cite{sutskever2014sequence} with RNN, and ~\citet{hwang2019post} use transformers~\cite{vaswani2017attention}. 
However, interwoven columns, tables, and text blocks make serialization difficult, substantially limiting the performance of a strict serialization approach.

\begin{figure}[t]
    \centering
    \includegraphics[width=0.96\linewidth]{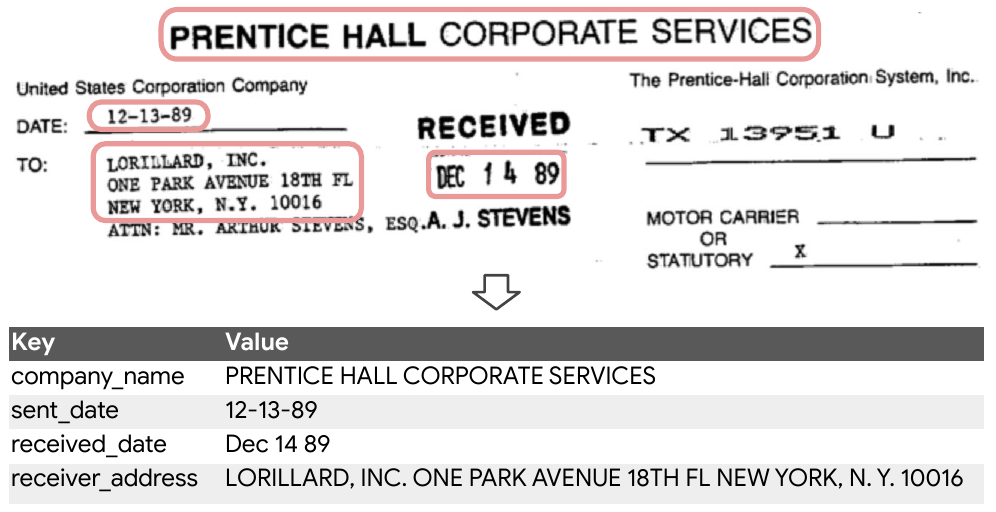}
    \vspace{-2mm}
    \caption{An illustration of the form document information extraction task.}
    \label{fig:ee_task}
    \vspace{-4mm}
\end{figure}

To model the structural information present in documents, \citet{katti2018chargrid,zhao2019cutie,denk2019bertgrid}
treat the documents as 2D image inputs and directly apply convolutional networks on them to preserve the spatial context during learning and inference. 
However, the performance is limited by the resolution of the 2D input grids. 
Another approach is a two-step pipeline ~\cite{hirano2007text} that leverages computer vision algorithms to first infer the layout structures of forms and then perform sequence information extraction. 
The methods are mostly demonstrated on plain text articles or documents~\cite{yang2017learning,soto2019visual} but not on highly entangled form documents~\cite{davis2019deep, zhang2019information}.

\begin{figure*}[t!]
    \centering
    \includegraphics[width=0.9\linewidth]{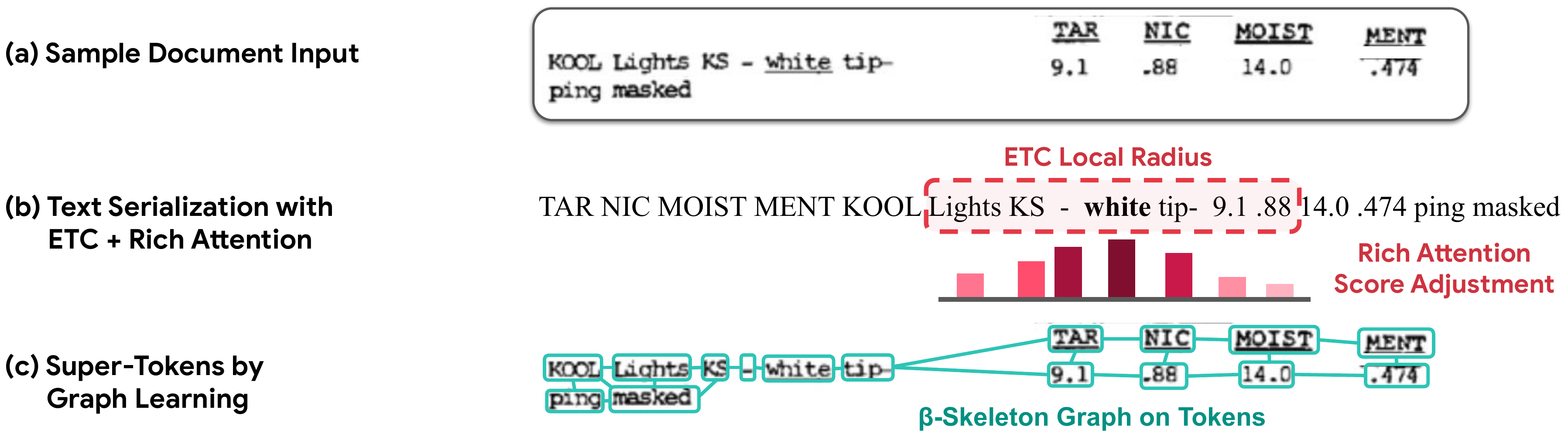}
    \vspace{-1mm}
    \caption{A walk-through example of the proposed Rich Attention and Super-Tokens of FormNet.
    (a) Input document.
    (b) The ETC transformer~\cite{ainslie2020etc} -- the core of our system -- is able to model long inputs by limiting attention to a local radius of serialized tokens. The proposed Rich Attention uses the spatial relationship between tokens to penalize unlikely attention edges.
    In this example, for the word `white', Rich Attention increases the relative weight for spatially nearby tokens such as `KS', `-' and `tip', while decreasing it for others, resulting in spatially aware attention scores.
    \label{fig:serialization}
    (c) Even though they belong to the same entity, \texttt{KOOL} and \texttt{masked} may not be visible to each other from within the local radius of ETC after the left-to-right, top-to-bottom serialization step, which breaks the text group on the left into multiple text segments.
    Our proposed Super-Tokens are generated by executing graph convolutional networks directly on 2D tokens before serialization. The edges of the graph leverage the inductive bias of the $\beta$-skeleton graph, allowing information propagation w.r.t.\ the structural layout of documents before text serialization introduces noise. Note that the $\beta$-skeleton graph connects \texttt{KOOL} and \texttt{masked} in this example.}
    \vspace{-2mm}
\end{figure*}

In this work, we propose FormNet, a structure-aware sequence model to mitigate the suboptimal serialization of forms by bridging the gap between plain sequence models and grid-like convolutional models. 
Specifically, we first design \emph{Rich Attention}, which leverages the spatial relationships between tokens in a form to calculate a more structurally meaningful attention score, and apply it in a recent transformer architecture for long documents~\cite{ainslie2020etc}.
Second, we construct \emph{Super-Tokens} for each word in a form by embedding representations from their neighboring tokens through graph convolutions. 
The graph construction process leverages strong inductive biases about how tokens are related to one another spatially in forms. 
Essentially, given a form document, FormNet builds contextualized \emph{Super-Tokens} before serialization errors can be propagated. 
A transformer model then takes these \emph{Super-Tokens} as input to perform sequential entity tagging and extraction.

In our experiments, FormNet outperforms existing methods while using (1) smaller model sizes and (2) less pre-training data while (3) avoiding the need for vision features. 
In particular, FormNet achieves new best F1 scores on CORD and FUNSD (97.28\% and 84.69\%, respectively) while using a 64\% sized model and 7.1x less pre-training data than the most recent DocFormer~\cite{appalaraju2021docformer}.

\section{Related Work}
Document information extraction was first studied in handcrafted rule-based models~\cite{lebourgeois1992fast,o1993document,ha1995recursive,simon1997fast}. Later 
\citet{marinai2005artificial,shilman2005learning,wei2013evaluation,chiticariu2013rule,schuster2013intellix} use learning-based approaches with engineered features. 
These methods encode low-level raw pixels ~\cite{marinai2005artificial} or assume form templates are known \emph{a priori}~\cite{chiticariu2013rule,schuster2013intellix}, which limits their generalization to documents with specific layout structures. 

\begin{figure*}[t!]
    \centering
    \includegraphics[width=0.98\linewidth]{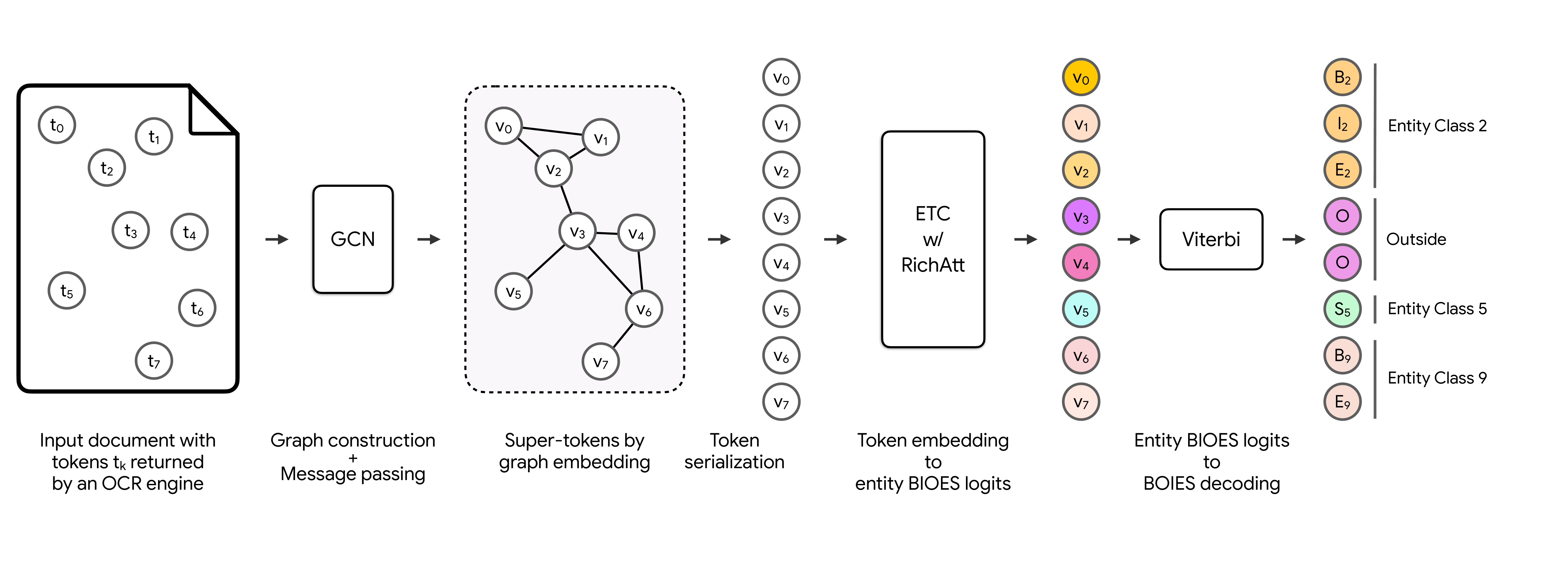}
    \vspace{-1mm}
    \caption{System overview of the proposed FormNet for form document key information extraction. 
    Given a document, we first use the BERT-multilingual vocabulary to tokenize the extracted OCR words. 
    We then feed the tokens and their corresponding 2D coordinates into a Graph Convolutional Network (GCN) for graph construction and message passing. 
    Next, we use ETC~\cite{ainslie2020etc} transformers with the proposed Rich Attention (RichAtt) mechanism to process the GCN-encoded structure-aware Super-Tokens for schema learning. 
    Finally, the Viterbi algorithm is used to decode and obtain the final entity extraction outputs.}
    \label{fig:overview}
    \vspace{-2mm}
\end{figure*}

In addition to models with limited or no learning capabilities, neural models have also been studied. 
\citet{palm2017cloudscan,aggarwal2020form2seq} use an RNN for document information extraction, while \citet{katti2018chargrid,zhao2019cutie,denk2019bertgrid} investigate convolutional models.
There are also self-attention networks (transformers) for document information extraction, motivated by their success in conventional NLU tasks.
\citet{majumder2020representation} extend BERT to representation learning for form documents. \citet{garncarek2020lambert} modified the attention mechanism in RoBERTa~\cite{liu2019roberta}.
\citet{xu2020layoutlm,xu2020layoutlmv2,powalski2021going,appalaraju2021docformer} are multimodal models that combine BERT-like architectures~\cite{devlin2018bert} and advanced computer vision models to extract visual content in images. 
Similarly, SPADE~\cite{hwang2020spatial} is a graph decoder built upon the transformer models for better structure prediction compared to simple BIO tagging.
The proposed FormNet is orthogonal to multimodal transformers and SPADE. Compared with multimodal models, FormNet focuses on modeling relations between words through graph convolutional learning as well as Rich Attention without using any visual modality; 
compared with SPADE, FormNet uses a graph encoder to encode inductive biases in form input. 
A straightforward extension would be to combine FormNet with either layout transformers or SPADE for capturing visual cues or better decoding, which we leave for future work.

Graph learning with sequence models has also been studied. 
On top of the encoded information through graph learning,
\citet{qian2018graphie,liu2019graph,yu2020pick} use RNN and CRF while we study Rich Attention in FormNet for decoding.
\citet{peng2017cross,song2018n} do not study document information extraction.

\section{FormNet for Information Extraction}
\paragraph{Problem Formulation.} 
%
Given serialized\footnote{Different Optical Character Recognition (OCR) engines implement different heuristics. One common approach is left-to-right top-to-bottom serialization based on 2D positions.} words of a form document, we formulate the problem as sequential tagging for tokenized words by predicting the corresponding key entity classes for each token.
Specifically, we use the ``BIOES'' scheme -- $\: \{$Begin, Inside, Outside, End, Single$\}$~\cite{ratinov2009design} to mark the spans of entities in token sequences and then apply the Viterbi algorithm.

\vspace{-2mm}
\paragraph{Proposed Approach.} 
By treating the problem as a sequential tagging task after serialization, we can adopt any sequence model.
To handle potentially long documents (e.g. multi-page documents), we select the long-sequence transformer extension ETC~\cite{ainslie2020etc} as our backbone\footnote{One can replace ETC with other long-sequence models, such as \citet{zaheer2020big}.}.

In practice, it is common to see an entity sequence cross multiple spans of a form document, demonstrating the difficulty of recovering from serialization errors.
As illustrated in Figure~\ref{fig:serialization}(b), 
\texttt{9.1} is next to \texttt{tip-},
while \texttt{ping masked} belong to the same entity as \texttt{tip-} are distant from it under the imperfect serialization.
Our remedy is to encode the original 2D structural patterns of forms \emph{in addition} to positions within the serialized sentences.
We propose two novel components to enhance ETC: \emph{Rich Attention} and \emph{Super-Tokens} (Figure~\ref{fig:serialization}). Rich Attention captures not only the semantic relationship but also the spatial distance between every pair of tokens in ETC's attention component. Super-tokens are constructed by graph convolutional networks before being fed into ETC. They model local relationships between pairs of tokens that might not be visible to each other or correctly inferred in an ETC model after suboptimal serialization. 

Figure~\ref{fig:overview} shows the overall system pipeline.
We discuss the details of ETC in Sec.\ \ref{fie_etc}, Rich Attention in Sec.\ \ref{sec:rich}, and Super-Token in Sec.\ \ref{sec:gcn}.

\subsection{Extended Transformer Construction}\label{fie_etc}
Transformers~\cite{vaswani2017attention} have demonstrated state-of-the-art performance on sequence modeling compared with RNNs. 
Extended Transformer Construction~\citep[ETC;][]{ainslie2020etc} further scales transformers to long sequences by replacing standard (quadratic complexity) attention with a sparse global-local attention mechanism. The small number of ``dummy'' {\bf global} tokens attend to all input tokens, but the input tokens attend only {\bf locally} to other input tokens within a specified local radius. An example can be found in Figure~\ref{fig:serialization}(b).  
As a result, space and time complexity are linear in the long input length for a fixed local radius and global input length. Furthermore, ETC allows a specialized implementation for efficient computation under this design. We refer interested readers to \citet{ainslie2020etc} for more details. In this work, we adopt ETC with a single global token as the backbone, as its linear complexity of attention with efficient implementation is critical to long document modeling in practice (e.g. thousands of tokens per document).

A key component in transformers for sequence modeling is the positional encoding~\cite{vaswani2017attention}, which models the positional information of each token in the sequence. Similarly, the original implementation of ETC uses~\citet{shaw2018self} for (relative) positional encoding.
However, token offsets measured based on the error-prone serialization may limit the power of positional encoding. We address this inadequacy by proposing \emph{Rich Attention} as an alternative, discussed in Section \ref{sec:rich}.

\subsection{Rich Attention}

\paragraph{Approach.}
Our new architecture -- inspired by work in dependency parsing \citep{dozat2019arc}, and which we call \emph{Rich Attention} -- avoids the deficiencies of absolute and relative embeddings~\cite{shaw2018self} by avoiding embeddings entirely. Instead, we compute the \emph{order of} and \emph{log distance between} pairs of tokens with respect to the x and y axis on the layout grid, and adjust the pre-softmax attention scores of each pair as a direct function of these values.\footnote{\emph{Order} on the y-axis answers the question ``which token is above/below the other?''} 
At a high level, for each attention head at each layer $\ell$, the model examines each pair of token representations $\mathbf{h}^\ell_i, \mathbf{h}^\ell_j$, whose actual order (using curly Iverson brackets) and log-distance are
\[
    o_{ij} = \{i < j\}\mbox{ and } d_{ij} = \ln(1 + |i - j|).
\]

\begin{figure}
    \centering
    \includegraphics[width=0.55\linewidth]{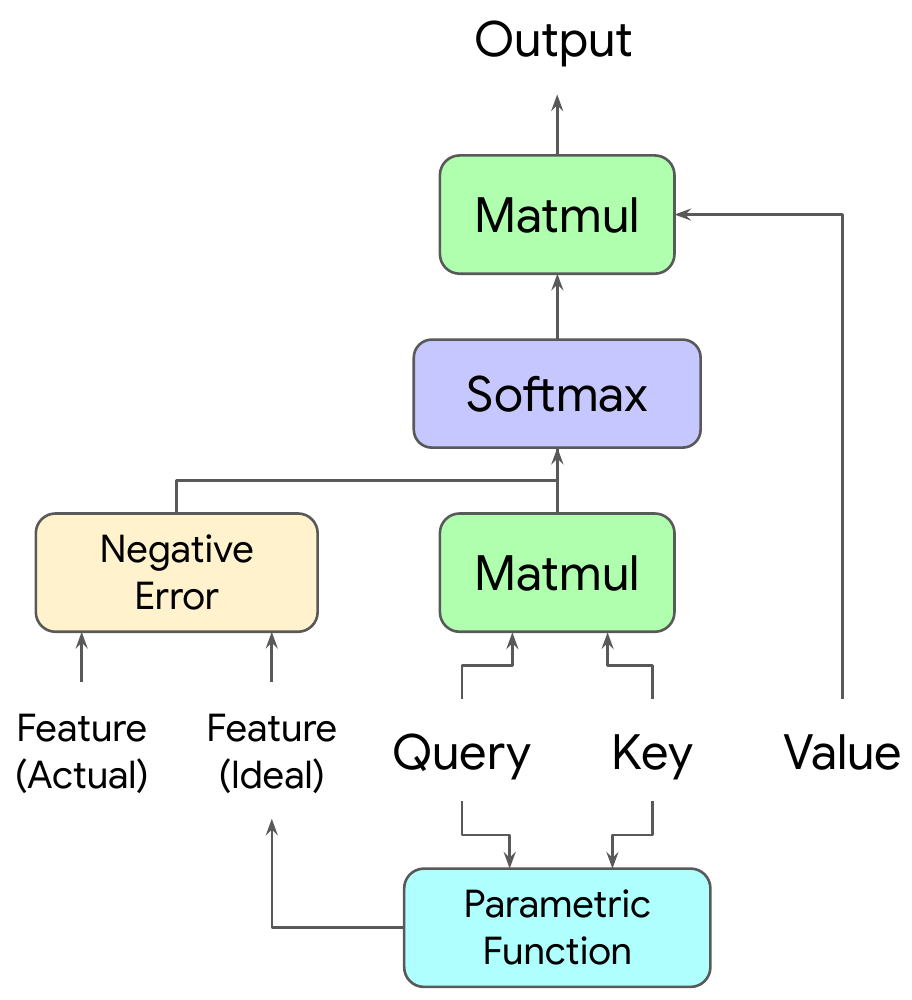}
    \vspace{-2mm}
    \caption{The network uses the Query and Key vectors to consider what value some low-level feature (e.g.\ distance) \emph{should} take if the tokens are related, and penalizes the attention score based on the error.}
    \label{fig:richatt}
    \vspace{-2mm}
\end{figure}

The model then determines the ``ideal'' orders and distances the tokens should have if there is a meaningful relationship between them.

\vspace{-5mm}
\begin{small}
\begin{align}
    \label{ideal order}p_{ij} &= \texttt{Sigmoid}(\text{affine}^{(p)}([\mathbf{h}^\ell_i; \mathbf{h}^\ell_j]))\\
    \label{ideal distance}\mu_{ij} &= \text{affine}^{(\mu)}([\mathbf{h}^\ell_i; \mathbf{h}^\ell_j])
\end{align}
\end{small}

\vspace{-5mm}\noindent
It compare the prediction and groudtruth using sigmoid cross-entropy and $L_2$ losses:\footnote{$\theta$ is a learned \emph{temperature} scalar unique to each head.}

%
%

\vspace{-5mm}
\begin{small}
\begin{align}
    \label{order score}s^{(o)}_{ij} &= o_{ij}\ln(p_{ij}) + (1-o_{ij})\ln(1-p_{ij})\\
    \label{distance score}s^{(d)}_{ij} &= -\frac{\theta^2(d_{ij} - \mu_{ij})^2}{2}
\end{align}
\end{small}

\vspace{-5mm}
\noindent Finally, these are added to the usual attention score 
\begin{small}
\[
    s_{ij} = \mathbf{q}_i^\top\mathbf{k}_j + s_{ij}^{(o)} + s_{ij}^{(d)},
\]
\end{small}

\label{sec:rich}
\begin{figure*}
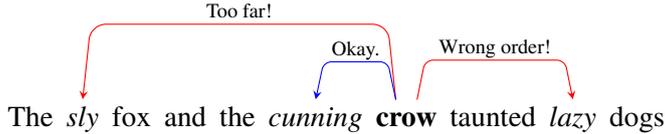

    \centering
    \begin{dependency}[text only label, label style={above}]
        \begin{deptext}
            The \& \textit{sly} \& fox \& and \& the \& \emph{cunning} \& \textbf{crow} \& taunted \& \emph{lazy} \& dogs\\
        \end{deptext}
        \depedge[edge unit distance=1.15ex, color=red]{7}{2}{Too far!}
        \depedge[color=blue]{7}{6}{Okay.}
        \depedge[edge unit distance=1.5ex, color=red]{7}{9}{Wrong order!}
    \end{dependency}
    \vspace{-3mm}
    \caption{A high-level visualization of how rich attention might act on a sentence within a head that composes words with their syntactic modifiers. There are three adjectives that the word \emph{crow} might attend to. However, one of them (\emph{lazy}) is on the wrong side, so its attention edge is penalized. Another (\emph{sly}) is many tokens away, so its attention edge is also penalized. Only one (\emph{cunning}) receives no significant penalties.}
    \label{fig:richatt-sent}
    \vspace{-2mm}
\end{figure*}

\vspace{-5mm}\noindent where $\mathbf{q}_i = \text{affine}^{(q)}(\mathbf{h}_i)$ and $\mathbf{k}_j = \text{affine}^{(k)}(\mathbf{h}_j)$. The rich attention pipeline is shown in Figure \ref{fig:richatt}.\footnote{The affine functions in Eqs.\ (\ref{ideal order}, \ref{ideal distance}) can optionally take the reduced-rank query/key terms $\mathbf{q}_i, \mathbf{k}_j$ as input instead of the layer input $\mathbf{h}^\ell_i, \mathbf{h}^\ell_j$ without sacrificing theoretical motivation. We take this approach for speed.}
By penalizing attention edges for violating these soft order/distance constraints, we essentially build into the model the ability to learn logical implication rules such as ``if $x_i$ is a noun, and $x_j$ is an adjective, and $x_i$ is related (i.e.\ \emph{attends}) to $x_j$, then $x_j$ is to the left of $x_i$''. Note the unidirectionality of this rule -- there could be many unrelated adjectives to the left of $x_i$, so the converse (which this approach \emph{cannot} learn) does not hold in any general sense. This is shown graphically in Figure \ref{fig:richatt-sent}.

{\noindent \bf Justification.}
The approach taken here is not arbitrary. It can be derived algebraically from the probability mass/density functions of the distributions we assume for each feature, and the assumption that a query's attention vector represents a probability distribution. Traditional dot product attention and relative position biases \citep{raffel2020exploring} can likewise be derived from this method, providing incidental justification for the approach. Consider the following, letting $L(X) = \ln(P(X))$ for brevity:

\vspace{-4mm}
\begin{small}
\begin{align}
    P(a_{ij} \mid \mathbf{h}_i,\mathbf{h}_j) &= \frac{P(\mathbf{h}_i, \mathbf{h}_j \mid a_{ij})P(a_{ij})}{\sum\limits_{j'}\left[P(\mathbf{h}_i, \mathbf{h}_{j'} \mid a_{ij'})P(a_{ij'})\right]}\nonumber\\
    &= \frac{\exp\left(L(\mathbf{h}_i,\mathbf{h}_j\mid a_{ij}) + L(a_{ij})\right)}{\sum\limits_{j'}\exp\left(L(\mathbf{h}_i,\mathbf{h}_{j'}\mid a_{ij'}) + L(a_{ij'})\right)}\nonumber\\
    \label{eq:richatt}&= \text{soft}\max_{j'}\left(L(\mathbf{h}_i, \mathbf{h}_{j'} \mid a_{ij'}) + L(a_{ij'})\right)_j
\end{align}
\end{small}

\vspace{-4mm}
\noindent
Here $\mathbf{a}_i$ represents a latent categorical ``attention'' variable. Eq.\ (\ref{eq:richatt}) shows that the softmax function itself can actually be derived from posterior probabilities, by simply applying Bayes' rule and then observing that $x = \exp(\ln(x))$ That is, one need not \emph{define} the posterior as being the softmax of some expression, it \emph{simply is} the softmax of some expression, specifically one that falls out of the assumptions one makes (explicitly or implicitly).

When we plug the Gaussian probability density function into $L(\mathbf{h}_i, \mathbf{h}_j \mid a_{ij})$, the expression simplifies to dot-product attention (with one additional fancy bias term); we show this in Appendix \ref{appendix: rich attention derivations}. If we assume $L(a_{ij})$ is uniform, then it divides out of the softmax and we can ignore it. If we assume it follows a Bernoulli distribution -- such that $L(a_{ij} = 1; p_{ij}) = \ln(p_{ij})$ -- it becomes equivalent to a learned bias matrix $B$.\footnote{There is an additional constraint that every element of $B$ must be negative; however, because the softmax function is invariant to addition by constants, this is inconsequential.}

Now, if we assume there is another feature $f_{ij}$ that conditions the presence of attention, such as the order or distance of $i$ and $j$, then we can use the same method to derive a parametric expression describing its impact on the attention probability.

\begin{small}
\vspace{-4mm}
\begin{align*}
    &P(a_{ij} \mid f_{ij}, \mathbf{h}_i, \mathbf{h}_j) = \text{soft}\max_{j'}(\\
    &\qquad L(f_{ij'} \mid \mathbf{h}_i, \mathbf{h}_{j'}, a_{ij'}) + L(\mathbf{h}_i, \mathbf{h}_{j'} \mid a_{ij'}) + L(a_{ij'}))_j
\end{align*}
\end{small}
The new term can be expanded by explicating assumptions about the distributions that govern $P(f_{ij} \mid \mathbf{h}_i, \mathbf{h}_j, a_{ij})$ and simplifying the expression that results from substituting their probability functions. If $f_{ij}$ is binary, then this process yields Eq.\ (\ref{order score}), and if $\ln(f_{ij})$ is normally distributed, we reach Eq.\ (\ref{distance score}), as derived in Appendix \ref{appendix: rich attention derivations}.
Given multiple conditionally independent features -- such as the order \emph{and} distance -- their individual scores can be calculated in this way and summed.
Furthermore, relative position biases \citep{raffel2020exploring} can thus be understood in this framework as binary features (e.g.\ $f_{ij} = \{i - j = -2\}$) that are conditionally independent of $\mathbf{h}_i$, $\mathbf{h}_j$ given $a_{ij}$, meaning that $L(f_{ij} \mid \mathbf{h}_i, \mathbf{h}_j, a_{ij}) = L(f_{ij} \mid a_{ij})$.

We call this new attention paradigm \emph{Rich Attention} because it allows the attention mechanism to be \emph{enriched} with an arbitrary set of low-level features. We use it to add order/distance features with respect to the x and y axes of a grid -- but it can also be used in a standard text transformer to encode order/distance/segment information, or it could be used in an image transformer \citep{parmar2018image} to encode relative pixel angle/distance information\footnote{The \emph{von Mises} or wrapped normal distribution would be most appropriate for angular features.}, without resorting to lossy quantization and finite embedding tables.

\subsection{Super-Token by Graph Learning}
\label{sec:gcn}
The key to sparsifying attention mechanisms in ETC~\cite{ainslie2020etc} for long sequence modeling is to have every token only attend to tokens that are within a pre-specified local radius in the serialized sequence. The main drawback to ETC in form understanding is that imperfect serialization sometimes results in entities being serialized too far apart from each other to attend in the local-local attention component (i.e. outside the local radius). A naive solution is to increase the local radius in ETC. However, it sacrifices the efficiency for modeling long sequences. Also, the self-attention may not be able to fully identify relevant tokens when there are many distractors~\citep[Figure~\ref{fig:visualization1};][]{serrano2019attention}.

To alleviate the issue, we construct a graph to connect nearby tokens in a form document. We design the edges of the graph based on strong inductive biases so that they have higher probabilities of belonging to the same entity type (Figure~\ref{fig:serialization}(c) and 
\ref{beta_skeleton_example}). 
Then, for each token, we obtain its \emph{Super-Token} embedding by applying graph convolutions along these edges to aggregate semantically meaningful information from its neighboring tokens. 
We use these super-tokens as input to the Rich Attention ETC for sequential tagging.
This means that even though an entity may have been broken up into multiple segments due to poor serialization, the super-tokens learned by the graph convolutional network will have recovered much of the context of the entity phrase.
We next introduce graph construction and the learning algorithm.

\vspace{-2mm}
\paragraph{Node Definition.}
Given a document with $N$ tokens denoted by $T = \{t_1, t_2, \ldots t_N\}$, we let $t_k$ refer to the $k$-th token in a text sequence returned by the OCR engine. The OCR engine generates the bounding box sizes and locations for all tokens, as well as the text within each box. We define node input representation for all tokens $T$ as vertices $V = \{\mathbf{v}_1, \mathbf{v}_2, \ldots \mathbf{v}_N\}$, where $\mathbf{v}_k$ concatenates attributes available for $t_k$. In our design, we use three common input modalities: (a) one-hot word embeddings, (b) spatial embeddings from the normalized Cartesian coordinate values of the four corners and height and width of a token bounding box~\cite{qian2018graphie,davis2019deep,liu2019graph}. The benefit of representing tokens in this way is that one can add more attributes to a vertex by simple concatenation without changing the macro graph architecture.

\vspace{-2mm}
\paragraph{Edge Definition.}
While the vertices $V$ represent tokens in a document, the edges characterize the relationship between all pairs of vertices. Precisely, we define directed edge embeddings for a set of edges $E$, where each edge $\mathbf{e}_{kl}$ connects two vertices $\mathbf{v}_k$ and $\mathbf{v}_l$, concatenating quantitative edge attributes. In our design, the edge embedding is composed of the relative distance between the centers, top left corners, and bottom right corners of the token bounding boxes. The embedding also contains the shortest distances between the bounding boxes along the horizontal and vertical axis. Finally, we include the height and width aspect ratio of $\mathbf{v}_k$, $\mathbf{v}_l$, and the bounding box that covers both of them.

\vspace{-2mm}
\paragraph{Graph construction.}
After contructing edge embeddings, we need discrete graphs to define connectivities.
One approach would be to create k-Nearest-Neighbors graphs~\cite{zhang2020deeprelational} -- but 
these may contain isolated components, which is not ideal for information propagation. Instead, we construct graphs using the $\beta$-skeleton algorithm~\cite{kirkpatrick1985framework} with $\beta=1$, which is found useful for document understanding in~\citet{wang2021general,lee2021rope}. It essentially creates a ``ball-of-sight'' graph with a linearly-bounded number of edges while also guaranteeing global connectivity as shown in Figure~\ref{beta_skeleton_example}.
More examples of constructed $\beta$-skeleton graphs can be found in Figure~\ref{fig:beta_skeleton} in the Appendix.

\vspace{-2mm}
\paragraph{Message passing.}
Graph message-passing is the key to propagating representations along the edges defined by the inductive bias, $\beta$-skeleton, that are free from the left-to-right top-to-bottom form document serialization. In our design, we perform graph convolutions~\citep[GCN;][]{gilmer2017neural} on concatenated features from pairs of neighboring nodes and edges connecting them. Hence the graph embedding is directly learned from back-propagation in irregular patterns of tokens in documents.

\begin{figure}
    \centering
    \includegraphics[width=0.9\linewidth]{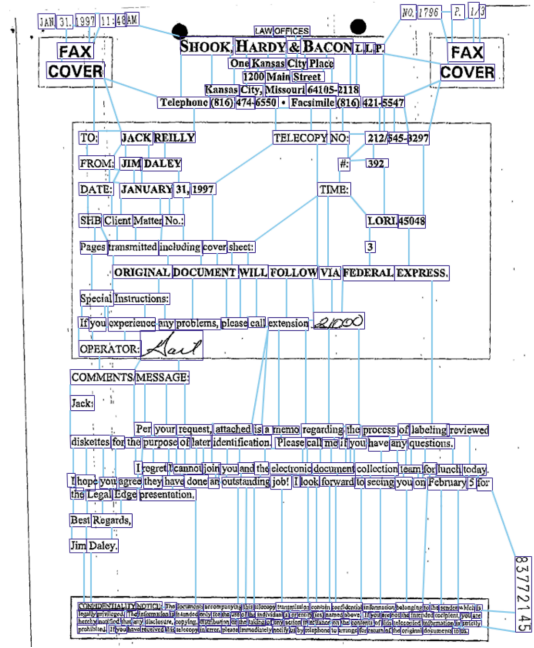}
    \vspace{-3mm}
    \caption{An illustration of the word-level $\beta$-skeleton graph of a FUNSD document, which is a  sparse but connected graph.}
    \label{beta_skeleton_example}
    \vspace{-3mm}
\end{figure}

\begin{table*}[!ht]
\setlength{\tabcolsep}{6pt} 
\centering
\resizebox{1\textwidth}{!}{
\footnotesize{
\begin{tabular}{llccccccc}
\toprule
\textbf{Dataset} & \textbf{Method} & \textbf{P} & \textbf{R} & \textbf{F1} & \textbf{Image} & \textbf{\#Params} &  \textbf{Pre-training Size}   \\
\toprule
CORD & SPADE~\cite{hwang2020spatial} & - & - & 91.5 & & 110M & BERT-multilingual \\
& UniLMv2~\cite{bao2020unilmv2} & 91.23 & 92.89 & 92.05 & & 355M & 160GB\\
& LayoutLMv1~\cite{xu2020layoutlmv2} & 94.32 & 95.54 & 94.93 & & 343M & 11M\\
& DocFormer~\cite{appalaraju2021docformer} & 96.46 & 96.14 & 96.30 & & 502M & 5M\\
\cmidrule{2-8}
& LayoutLMv2~\cite{xu2020layoutlmv2} & 95.65 & 96.37 & 96.01 & \checkmark & 426M & 11M\\
& TILT~\cite{powalski2021going} & - & - & 96.33 & \checkmark & 780M & 1.1M\\
& DocFormer~\cite{appalaraju2021docformer} & 97.25 & 96.74 & 96.99 & \checkmark & 536M & 5M\\
\cmidrule{2-8}
& FormNet (ours) & 98.02 & 96.55 & \textbf{97.28} & & 345M & 0.7M (9GB) \\
\midrule
FUNSD & SPADE~\cite{hwang2020spatial} & - & - & 70.5 & & 110M & BERT-multilingual \\
& UniLMv2~\cite{bao2020unilmv2} & 67.80 & 73.91 & 70.72 & & 355M & 160GB\\
& LayoutLMv1~\cite{xu2020layoutlm} & 75.36 & 80.61 & 77.89 & & 343M & 11M \\
& DocFormer~\cite{appalaraju2021docformer} & 81.33 & 85.44 & 83.33 & & 502M & 5M \\
\cmidrule{2-8}
& LayoutLMv1~\cite{xu2020layoutlm} & 76.77 & 81.95 & 79.27 & \checkmark & 160M & 11M \\
& LayoutLMv2~\cite{xu2020layoutlmv2} & 83.24 & 85.19 & 84.20 & \checkmark & 426M & 11M \\
& DocFormer~\cite{appalaraju2021docformer} & 82.29 & 86.94 & 84.55 & \checkmark & 536M & 5M \\
\cmidrule{2-8}
& FormNet (ours)  & 85.21 & 84.18 & \textbf{84.69} & & 217M &  0.7M (9GB)\\
\midrule
Payment & NeuralScoring~\cite{majumder2020representation} & - & - & 87.80 & & - & 0\\
        & FormNet (ours)  & 92.70 & 91.69 & \textbf{92.19} & & 217M & 0\\
\bottomrule
\end{tabular}
}
}
\vspace{-2mm}
\caption{\label{comparison} Entity-level precision, recall, and F1 score comparisons on three standard benchmarks. The proposed FormNet establishes new state-of-the-art results on all three datasets.
On FUNSD and CORD, FormNet significantly outperforms the most recent DocFormer~\cite{appalaraju2021docformer} while using a 64\% sized model and 7.1x less pre-training data. For detailed FormNet family performance please see Table~\ref{formnet_family_ablation}.}
\vspace{-2mm}
\end{table*}

\section{Evaluation}
\label{sec:exp}
We evaluate how the two proposed structural encoding components, Rich Attention and Super-Tokens, impact the overall performance of form-like document key information extraction. We perform extensive experiments on three standard benchmarks\footnote{We note that SROIE~\cite{huang2019icdar2019} and Kleister-NDA~\cite{gralinski2020kleister} are designed for key-value pair extraction instead of direct entity extraction. We leave the work of modifying FormNet for key-value pair extraction in the future.} and compare the proposed method with recent competing approaches.

\subsection{Datasets}
\paragraph{CORD.}
We evaluate on CORD~\cite{park2019cord}, which stands for the Consolidated Receipt Dataset for post-OCR parsing.
The annotations are provided in 30 fine-grained semantic entities such as store name, menu price, table number, discount, etc. 
We use the standard evaluation set that has 800 training, 100 validation, and 100 test samples.

\vspace{-2mm}
\paragraph{FUNSD.}
FUNSD~\cite{jaume2019} is a public dataset for form understanding in noisy scanned documents. It is a subset of the Truth Tobacco Industry Document (TTID)\footnote{http://industrydocuments.ucsf.edu/tobacco}.
The dataset consists of 199 annotated forms with 9,707 entities and 31,485 word-level annotations for 4 entity types: header, question, answer, and other.
We use the official 75-25 split for the training and test sets.

\vspace{-2mm}
\paragraph{Payment.}
We use the large-scale payment data \cite{majumder2020representation} that consists of around 10K documents and 7 semantic entity labels from human annotators.
The corpus comes from different vendors with different layout templates. 
We follow the same evaluation protocol and dataset splits used in~\citet{majumder2020representation}.

\begin{figure}[t!]
    \centering
    \includegraphics[width=0.98\linewidth]{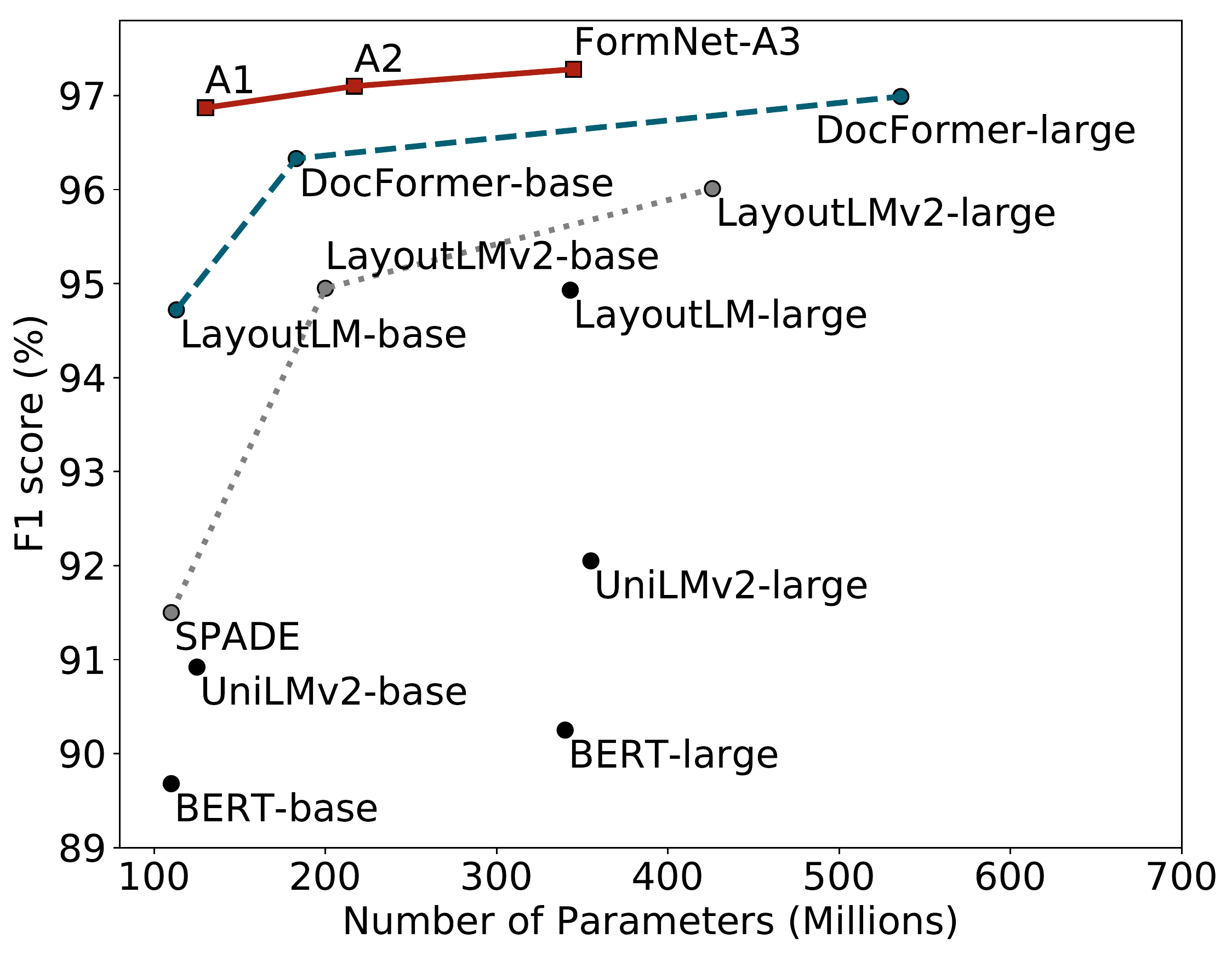}
    \vspace{-2mm}
    \caption{\textbf{Model Size vs. Entity Extraction F1 Score} on CORD benchmark. The proposed FormNets significantly outperform other recent approaches -- FormNet-A2 achieves higher F1 score (97.10\%) while using a 2.5x smaller model and 7.1x less pre-training data than DocFormer~\citep[96.99\%;][]{appalaraju2021docformer}. FormNet-A3 obtains the highest 97.28\% F1 score.}
    \label{fig:cord_params}
    \vspace{-2mm}
\end{figure}

\subsection{Experimental Setup}
\label{sec:setup}
Given a document, we first use the BERT-multilingual vocabulary to tokenize the extracted OCR words.
%
Super-tokens are then generated by direct graph embedding on these 2D tokens.
Next, we use ETC transformer layers to continue to process the super-tokens based on the serialization provided by the corresponding datasets. 
Please see Appendix~\ref{implementation_appendix} for implementation details.

We use 12-layer GCN and 12-layer ETC in FormNets and scale up the FormNet family with different numbers of hidden units and attention heads to obtain FormNet-A1 (512 hidden units and 8 attention heads), A2 (768 hidden units and 12 attention heads), and A3 (1024 hidden units and 16 attention heads). Ablations on the FormNets can be found in Figure~\ref{fig:cord_params} and~\ref{pretrain}, and Table~\ref{formnet_family_ablation} in Appendix.

\vspace{-2mm}
\paragraph{MLM Pre-training.}
Following~\citet{appalaraju2021docformer}, we collect around 700k unlabeled form documents for unsupervised pre-training. 
We adopt the Masked Language Model (MLM) objective~\cite{taylor1953cloze, devlin2018bert} to pre-train the networks. 
This forces the networks to reconstruct randomly masked tokens in a document to learn the underlying semantics of language from the pre-training corpus. We train the models from scratch using Adam optimizer with batch size of 512. The learning rate is set to 0.0002 with a warm-up proportion of 0.01.

\vspace{-2mm}
\paragraph{Fine-tuning.}
We fine-tune all models in the experiments using Adam optimizer with batch size of 8. The learning rate is set to 0.0001 without warm-up. We use cross-entropy loss for the multi-class BIOES tagging tasks. The fine-tuning is conducted on Tesla V100 GPUs for approximately 10 hours on the largest corpus. Note that we only apply the MLM pre-training for the experiments on CORD and FUNSD as in~\citet{xu2020layoutlm,xu2020layoutlmv2}.
For the experiments on Payment, we follow~\citet{majumder2020representation} to directly train all networks from scratch without pre-training.

\subsection{Results}
\paragraph{Benchmark Comparison.}
Table~\ref{comparison} lists the results that are based on the same evaluation protocal\footnote{Micro-F1 for CORD and FUNSD by following the implementation in~\citet{xu2020layoutlmv2}; macro-F1 for Payment~\cite{majumder2020representation}.}. The proposed FormNet achieves the new best F1 scores on CORD, FUNSD, and Payment benchmarks.
Figure~\ref{fig:cord_params} shows model size vs.\ F1 score for all recent approaches.
%
On CORD and FUNSD, FormNet-A2 (Table~\ref{formnet_family_ablation} in Appendix)  outperforms the most recent DocFormer~\cite{appalaraju2021docformer} while using a 2.5x smaller model and 7.1x less unlabeled pre-training documents.
On the larger CORD, FormNet-A3 continues to improve the performance to the new best 97.28\% F1.
In addition, we observe no difficulty training the FormNet from scratch on the Payment dataset. These demonstrate the  parameter efficiency and the training sample efficiency of the proposed FormNet.

\begin{figure}
    \centering
    \includegraphics[width=0.98\linewidth]{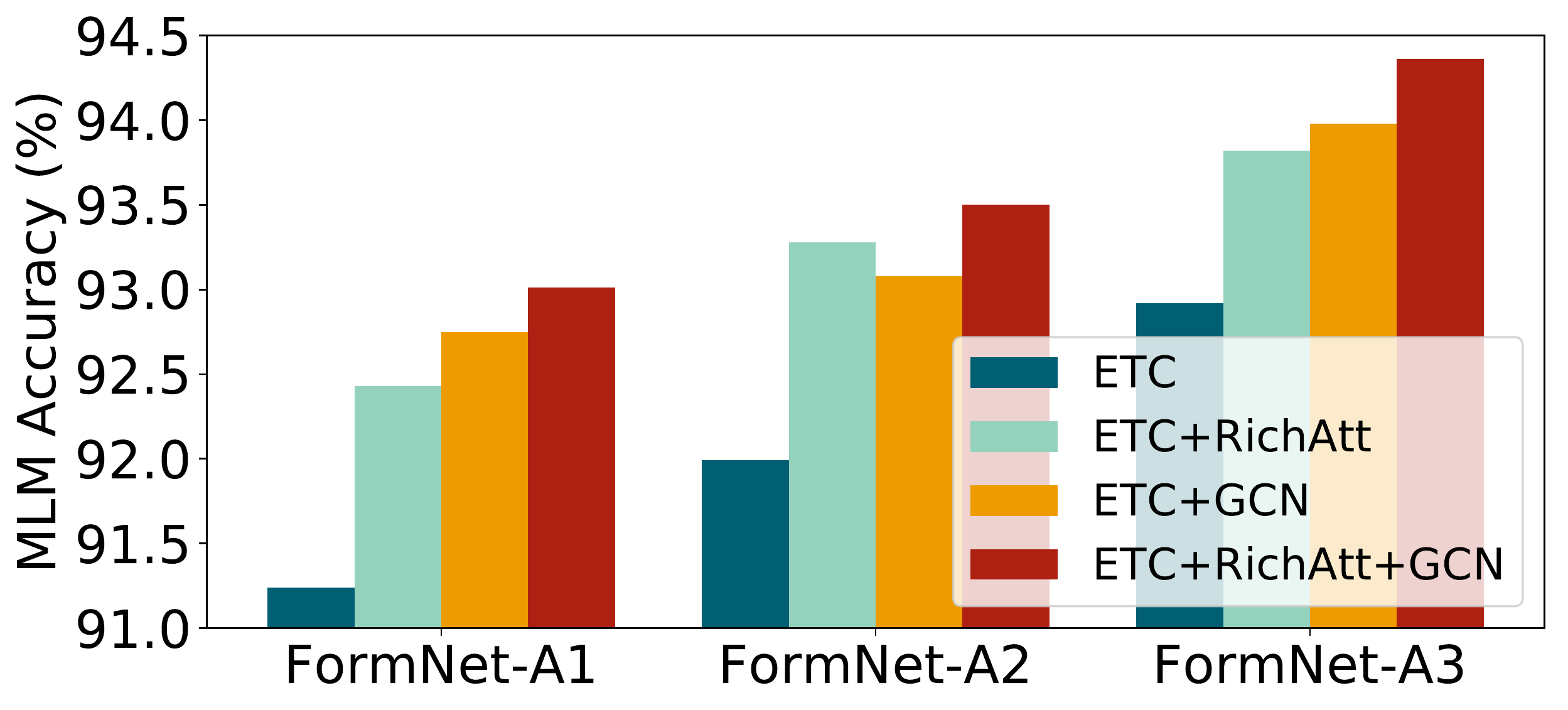}
    \vspace{-2mm}
    \caption{Performance of the MLM pre-training. Both the proposed Rich Attention (RichAtt) and Super-Token by Graph Convolutional Network (GCN) components improve upon ETC~\cite{ainslie2020etc} baseline by a large margin, showing the effectiveness of their structural encoding capability on large-scale form documents.}
    \label{pretrain}
    \vspace{0mm}
\end{figure}

\begin{table}[!t]
\setlength{\tabcolsep}{7pt} 
\centering
\footnotesize
\resizebox{0.43\textwidth}{!}{
\footnotesize{
\begin{tabular}{lccccccccc}
\toprule
 & \textbf{RichAtt} & \textbf{GCN} & \textbf{P} & \textbf{R} & \textbf{F1}   \\
\midrule
\multirow{4}{*}{\rotatebox{90}{CORD}}  &  &   & 91.40 & 91.75 & 91.57  \\
                    & \checkmark &            & 97.28 & 95.19 & 96.03  \\
                    &            & \checkmark & 96.50 & 95.13 & 95.81  \\
                    & \checkmark & \checkmark & 97.50 & 96.25 & \textbf{96.87}  \\      
\midrule
\multirow{4}{*}{\rotatebox{90}{FUNSD}}  &  &  & 69.24 & 62.86 & 65.90  \\
                    & \checkmark &            & 82.16 & 82.28 & 82.22  \\
                    &            & \checkmark & 78.83 & 79.93 & 79.37  \\
                    & \checkmark & \checkmark & 84.17 & 84.88 & \textbf{84.53}  \\                    
\bottomrule
\end{tabular}
}
}
\vspace{-2mm}
\caption{\label{fine-tune} Ablation of the proposed Rich Attention (RichAtt) and Super-Token by Graph Convolutional Network (GCN) in entity-level precision, recall, and F1 score on CORD and FUNSD benchmarks using FormNet-A1. Both RichAtt and GCN significantly improve upon ETC~\cite{ainslie2020etc} baseline by a large margin.}
\vspace{-1mm}
\end{table}

\vspace{-2mm}
\paragraph{Effect of Structural Encoding in Pre-training.}
We study the importance of the proposed Rich Attention and Super-Token by GCN on the large-scale MLM pre-training task across three FormNets as summarized in Figure~\ref{pretrain}. Both Rich Attention and GCN components improve upon the ETC~\cite{ainslie2020etc} baseline on reconstructing the masked tokens by a large margin, showing the effectiveness of their structural encoding capability on form documents. 
The best performance is obtained by incorporating both.

\begin{figure*}
    \centering
    \includegraphics[width=0.9\linewidth]{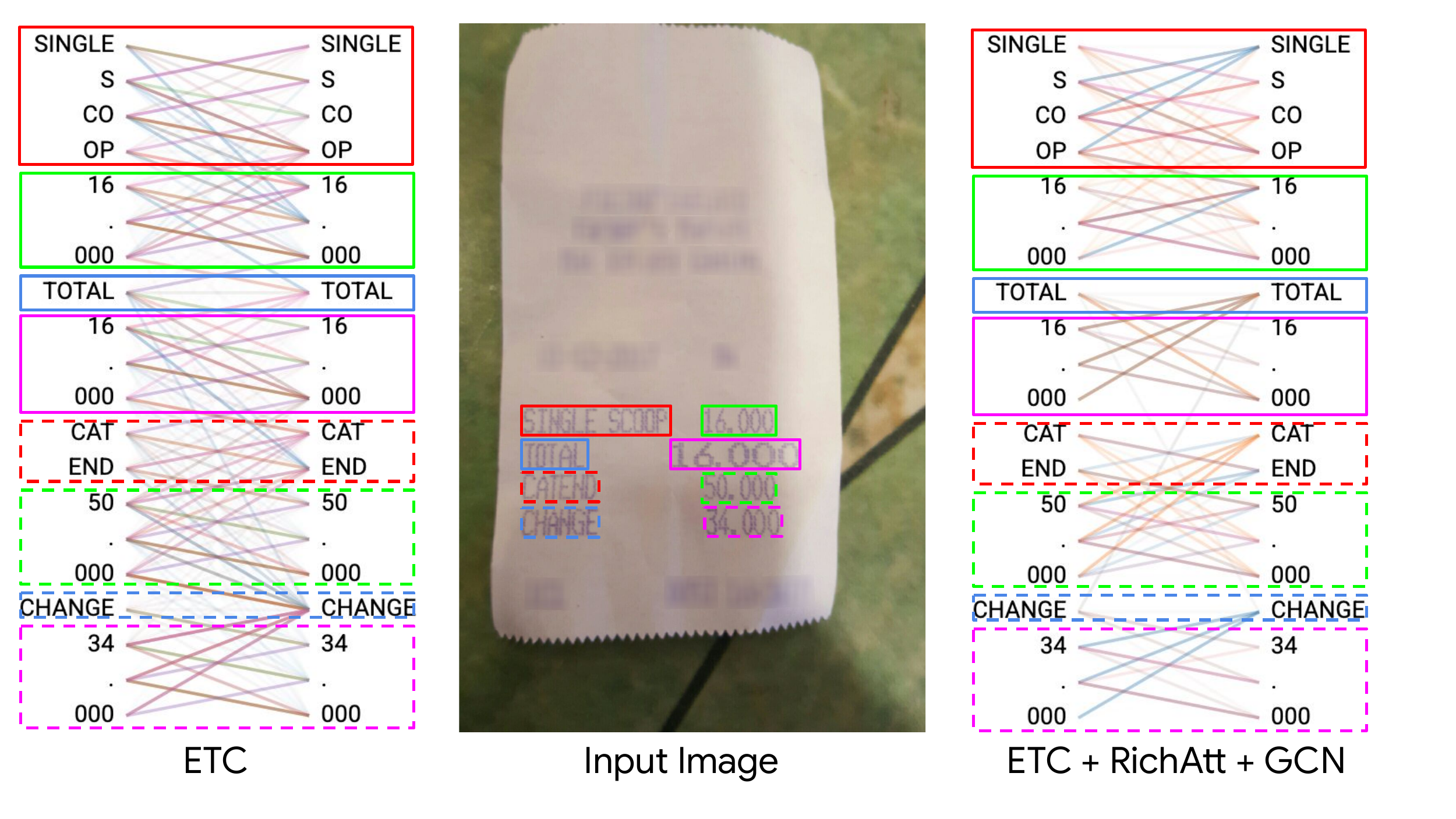}
    \vspace{-2mm}
    \caption{The attention scores for ETC and ETC+RichAtt+GCN models. Unlike the ETC model, the ETC+RichAtt+GCN model makes tokens attend to other tokens within the same visual blocks, along with tokens aligned horizontally, thus strongly leveraging structural cues.}
    \label{fig:visualization1}
    \vspace{0mm}
\end{figure*}

\begin{figure*}[ht]
    \centering
    \includegraphics[width=0.95\linewidth]{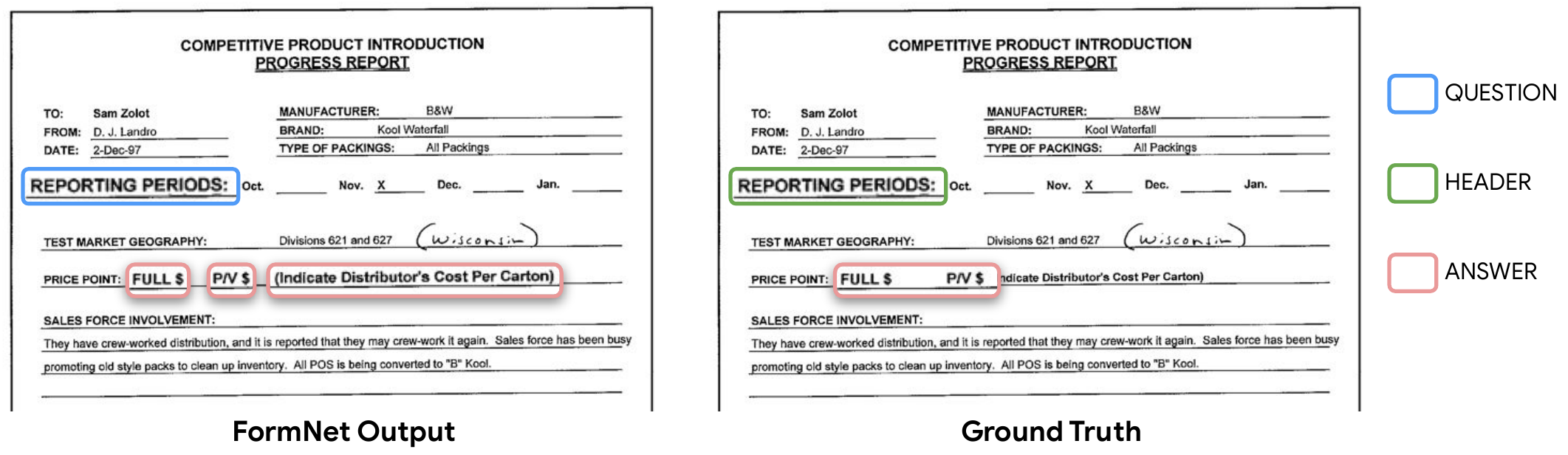}
    \vspace{-3mm}
    \caption{The ambiguous cases where the FormNet predictions do not match the human-annotated ground truth. In this visualization we only showcase mismatched entities.}
    \label{fig:ambiguous}
    \vspace{0mm}
\end{figure*}

\vspace{-2mm}
\paragraph{Effect of Structural Encoding in Fine-tuning.}
We ablate the effect of the proposed Rich Attention and Super-Tokens by GCN on the fine-tuning tasks and measure their entity-level precision, recall, and F1 scores. In Table~\ref{fine-tune}, we see that both Rich Attention and GCN improve upon the ETC~\cite{ainslie2020etc} baseline on all benchmarks. 
In particular, Rich Attention brings 4.46 points and GCN brings 4.24 points F1 score improvement over the ETC baseline on CORD. We also see a total of 5.3 points increase over the baseline when using both components, showing their orthogonal effectiveness of encoding structural patterns. More ablation can be found in Section~\ref{gcn_importance_appendix} and Table~\ref{fine-tune2} in Appendix.

\subsection{Visualization}
Using BertViz~\citep{vig-2019-multiscale}, we visualize the local-to-local attention scores for specific examples of the CORD dataset for the ETC baseline and the ETC+RichAtt+GCN (FormNet) models. Qualitatively in Figure \ref{fig:visualization1}, we notice that the tokens attend primarily to other tokens within the same visual block for ETC+RichAtt+GCN. Moreover for that model, specific attention heads are attending to tokens aligned horizontally, which is a strong signal of meaning for form documents. No clear attention pattern emerges for the ETC model, suggesting the Rich Attention and Super-Token by GCN enable the model to learn the structural cues and leverage layout information effectively. More visualization examples are given in the Appendix \ref{visualization_appendix}. We also show sample model outputs in Figure~\ref{fig:ambiguous}.

\section{Conclusion}
We present a novel model architecture for key entity extraction for forms, FormNet. We show that the proposed Rich Attention and Super-Token components help the ETC transformer to excel at form understanding in spite of noisy serialization, as evidenced quantitatively by its state-of-the-art performance on three benchmarks and qualitatively by its more sensible attention patterns. In the future, we would like to explore multi-modality input such as images.

\clearpage

\bibliography{anthology,custom}
\bibliographystyle{acl_natbib}

\clearpage

\appendix

\section{Implementation Details}
\label{implementation_appendix}
The proposed FormNet consists of a GCN encoder to generate structure-aware super-tokens followed by a ETC transformer decoder equipped with Rich Attention for key information extraction. Each GCN layer is a 2-layer multi-layer Perceptron (MLP) with the same number of hidden units as the ETC transformer. The maximum number of neighbors is set to 8 so the graph convolution computation grows linearly w.r.t. the number of vertices.
An 1-head attention aggregation function is used after each message passing. We also adopt skip-connection and layer-normalization after each GCN calculation. The ETC transformer takes super-tokens as input. The maximum sequence length is set to 1024. We follow~\citet{ainslie2020etc} for other hyper-parameter settings.

\section{Impact of Super-Tokens by Graph Convolutional Networks}
\label{gcn_importance_appendix}
In this experiment, we investigate whether simply increasing the network capacity of the ETC transformer with Rich Attention (RichAtt) can surpass the performance of the FormNet (ETC+RichAtt+GCN). Here ETC-heavy uses 768 hidden units instead of 512 in ETC-standard for both local and global tokens.

Table~\ref{gcn_ablation} shows that this is not the case. Simply increasing the network capacity of the ETC transformer from 104M parameters to 187M parameters only improves the performance by 0.7\% on FUNSD. On the contrary, the proposed Super-Tokens by GCN continues to improve the standard ETC + RichAtt and outperforms ETC-heavy + RichAtt by a large margin.
This evidence suggests that GCN captures the structural information from form documents effectively, which is challenging for ETC due to the limited local radius and multiple text segment issue from imperfect text serialization.
These encourage the design of FormNet to balance between efficiency of modeling long documents (ETC) and effectiveness of modeling structural information (GCN).

\begin{table}[ht]
\setlength{\tabcolsep}{7pt} 
\centering
\resizebox{0.49\textwidth}{!}{
\footnotesize{
\begin{tabular}{lccccc}
\toprule
\textbf{Model}  & \textbf{F1} & \textbf{\#Params}   \\
\toprule
 ETC-standard + RichAtt  & 82.22 & 104M \\
 ETC-heavy + RichAtt  & 82.92 & 187M \\
 \midrule
 ETC-standard + RichAtt + GCN & 84.53 & 131M \\
\bottomrule
\end{tabular}
}
}
\vspace{-2mm}
\caption{\label{gcn_ablation} The impact of Super-Tokens by Graph Convolutional Networks (GCNs) compared to heavier ETC transformers. The proposed FormNet (ETC-standard + RichAtt + GCN) significantly outperforms the ETC-heavy + RichAtt counterparts while using much less number of parameters, showing the effectiveness of the structural modeling capability of GCN.
}
\end{table}

\section{Rich Attention Derivations}
\label{appendix: rich attention derivations}
Here we lay out more explicitly the assumptions and steps needed to derive Rich Attention.
First, we assume that there is a latent categorical \emph{attention} feature $a_{ij} \in \{0, 1\}$ that indicates the presence or absence of some unique relevant relationship between tokens $i$ and $j$. In the context of transformers, when $a_{ij} = 1$ (abbreviated simply $a_{ij}$), the ``value'' hidden state $\mathbf{v}_j$ gets combined with token $i$'s context representation and propagated up the network.
\begin{align*}
    \mathbf{c}_i &= \sum\limits_{j}\left[a_{ij}\mathbf{v}_j\right]
\end{align*}
However, since categorical variables are discrete (therefore undifferentiable), we use the (differentiable) \emph{probability} of $a_{ij}$ to compute the \emph{expected} value state instead.
\begin{align*}
    E[\mathbf{c}_i] &= \sum\limits_{j}\left[P(a_{ij} \mid \mathbf{h}_i, \mathbf{h}_j, \ldots)\mathbf{v}_j\right]
\end{align*}
The expressions for $P(a_{ij} \mid \mathbf{h}_i,\mathbf{h}_j)$ and $P(a_{ij} \mid f_{ij}, \mathbf{h}_i, \mathbf{h}_j)$ derived in Section \ref{sec:rich} are repeated below, again letting $L(X) = \ln(P(X))$ for readability.

\begin{small}
\begin{align*}
    &P(a_{ij} \mid \mathbf{h}_i,\mathbf{h}_j) = \text{soft}\max_j(\\
        &\qquad L(\mathbf{h}_i, \mathbf{h}_j \mid a_{ij}) + L(a_{ij}))_i\\
    &P(a_{ij} \mid f_{ij}, \mathbf{h}_i, \mathbf{h}_j) = \text{soft}\max_j(\\
        &\qquad L(f_{ij} \mid \mathbf{h}_i, \mathbf{h}_j, a_{ij}) + L(\mathbf{h}_i, \mathbf{h}_j \mid a_{ij}) + L(a_{ij}))
\end{align*}
\end{small}
Note that here and in future derivations, we only care about the case when $a_{ij} = 1$, meaning the value of $a_{ij}$ is constant and can be effectively ignored. Theorem \ref{theorem: h given a} shows how to derive dot-product attention, Theorem \ref{theorem: binary f} solves for a binary-valued feature, and Theorem \ref{theorem: exp f} solves for a real-valued feature on an exponential scale; we leave the derivation for other feature types and probability distributions as a fun exercise for the reader.

\section{Examples of $\beta$-skeleton Graphs}
Figure~\ref{fig:beta_skeleton} shows a constructed $\beta$-skeleton graph on the public FUNSD dataset. By using the ``ball-of-sight'' strategy, $\beta$-skeleton graph offers high connectivity between word vertices for necessary message passing while being much sparser than fully-connected graphs for efficient forward and backward computations.

\section{Additional Attention Visualization}\label{visualization_appendix}
Figure \ref{fig:visualization2} shows additional attention visualization. The ETC+RichAtt+GCN model has very interpretable attention scores due to its ability to leverage spatial cues. As a result, the model strongly attends to tokens in the same visual blocks, or that have horizontal alignment. Specific heads also have specific roles: the pink head attends to the token on the right (reading order) within a block and captures intra-block semantics. The blue head attends to the previous horizontally-aligned block (in Figure \ref{fig:visualization2}, the tokens "To", "fal", "1560" and "00" all attend to the token "Sub") and captures inter-block semantics.

\begin{table}[h]
\setlength{\tabcolsep}{7pt} 
\vspace{5mm}
\centering
\footnotesize
\resizebox{0.48\textwidth}{!}{
\footnotesize{
\begin{tabular}{lcccccccc}
\toprule
\textbf{Dataset} & \textbf{\#Samples} & \textbf{FormNet} & \textbf{P} & \textbf{R} & \textbf{F1}   \\
\midrule
FUNSD & 199  & A1 & 84.17 & 84.88 & 84.53  \\
      &      & A2 & 85.21 & 84.18 & 84.69  \\
\midrule
CORD & 1,000  & A1 & 97.50 & 96.25 & 96.87  \\
     &       & A2 & 97.51 & 96.70 & 97.10  \\
     &       & A3 & 98.02 & 96.55 & 97.28  \\
\bottomrule
\end{tabular}
}
}
\vspace{-2mm}
\caption{\label{formnet_family_ablation} Scaling the FormNet family on CORD and FUNSD benchmarks. FormNet-A2 outperforms the most recent DocFormer~\cite{appalaraju2021docformer} on both datasets while being a 2.5x smaller model. On the larger CORD dataset, FormNet-A3 continues to boost the performance to the new best 97.28\% F1.}
\end{table}

\begin{table}[h]
\setlength{\tabcolsep}{7pt} 
\centering
\footnotesize
\resizebox{0.43\textwidth}{!}{
\footnotesize{
\begin{tabular}{lccccccccc}
\toprule
 & \textbf{RichAtt} & \textbf{GCN} & \textbf{P} & \textbf{R} & \textbf{F1}   \\
\midrule
\multirow{4}{*}{\rotatebox{90}{Payment}}  &  &   & 83.91 & 83.27 & 83.58  \\
                    & \checkmark &            & 92.10 & 91.48 & 91.79  \\
                    &            & \checkmark & 87.79 & 84.47 & 86.10  \\
                    & \checkmark & \checkmark & 92.70 & 91.69 & \textbf{92.19}  \\                          
\bottomrule
\end{tabular}
}
}
\vspace{-2mm}
\caption{\label{fine-tune2} Ablation of the proposed Rich Attention (RichAtt) and Super-Token by Graph Convolutional Network (GCN) in entity-level precision, recall, and F1 score on the Payment benchmark using FormNet-A2. Both RichAtt and GCN significantly improve upon ETC~\cite{ainslie2020etc} baseline by a large margin.}
\end{table}


\begin{figure*}[ht]
\centering
\includegraphics[width=0.98\linewidth]{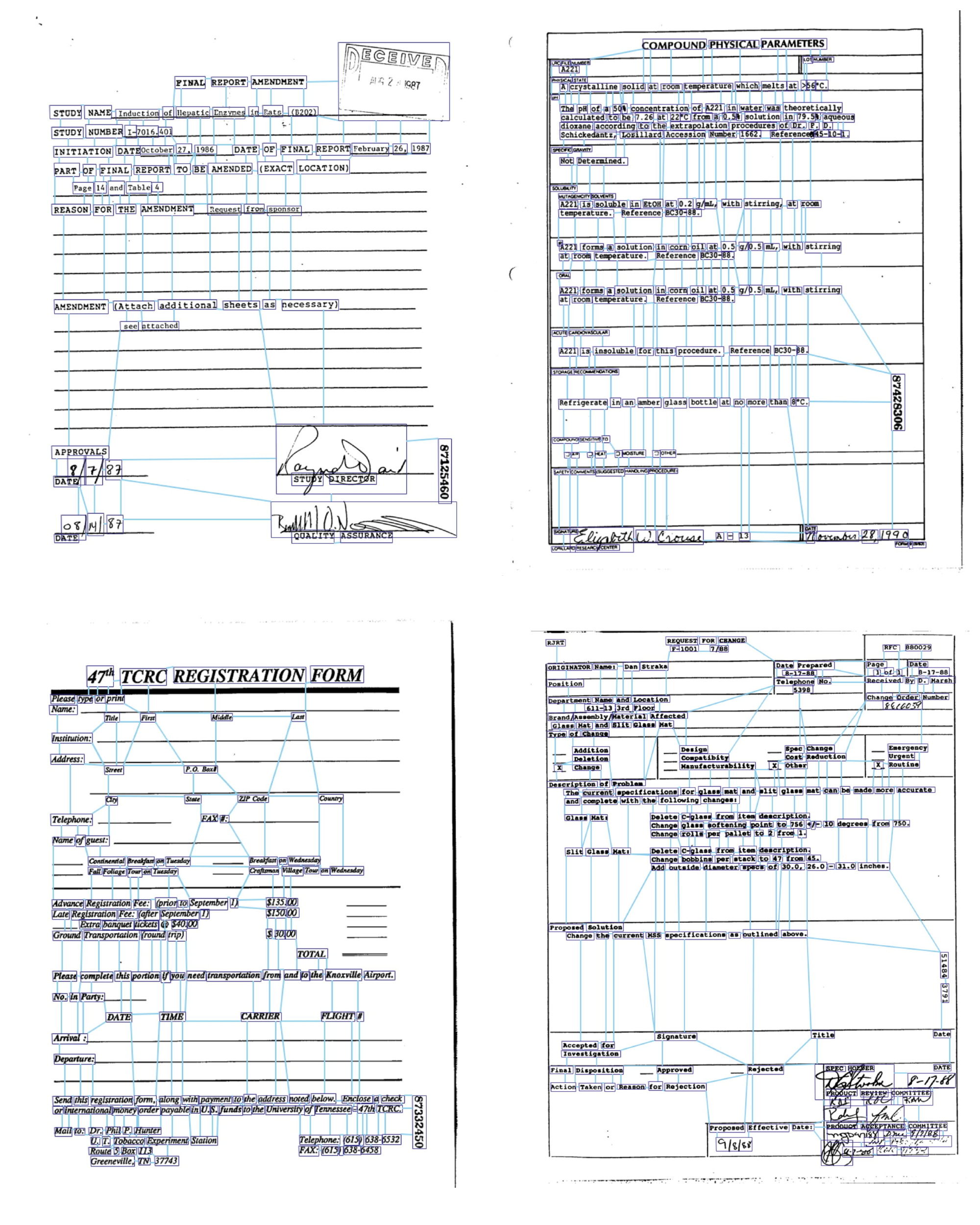}
\vspace{-2mm}
\caption{Illustrations of word-level $\beta$-skeleton graph of FUNSD documents. $\beta$-skeleton graphs provide structurally meaningful connectivity between vertices for effective message passing during representation learning and inference.}
\label{fig:beta_skeleton}
\end{figure*}

\begin{figure*}[ht]
    \centering
    \includegraphics[width=0.9\linewidth]{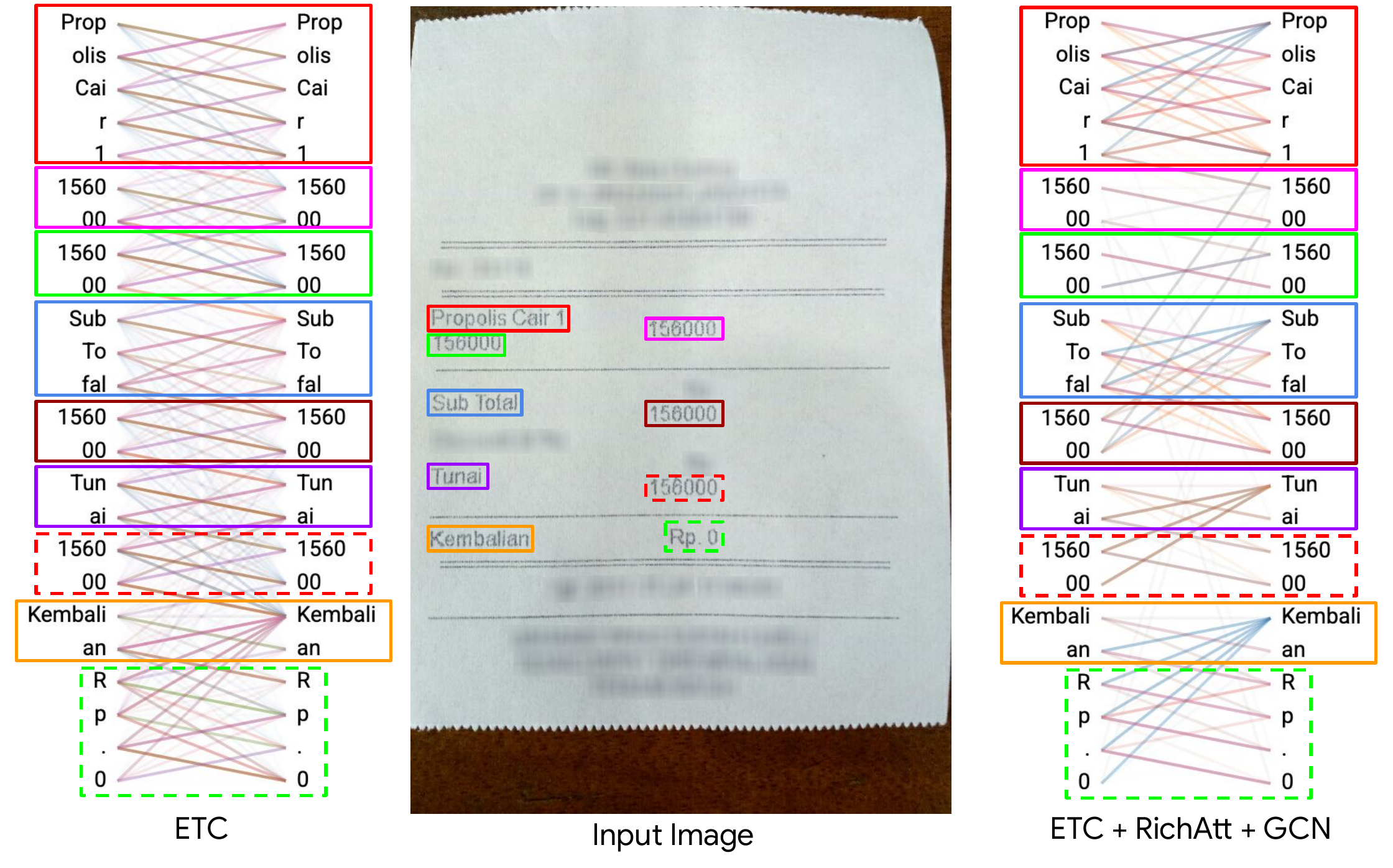}
    \includegraphics[width=0.94\linewidth]{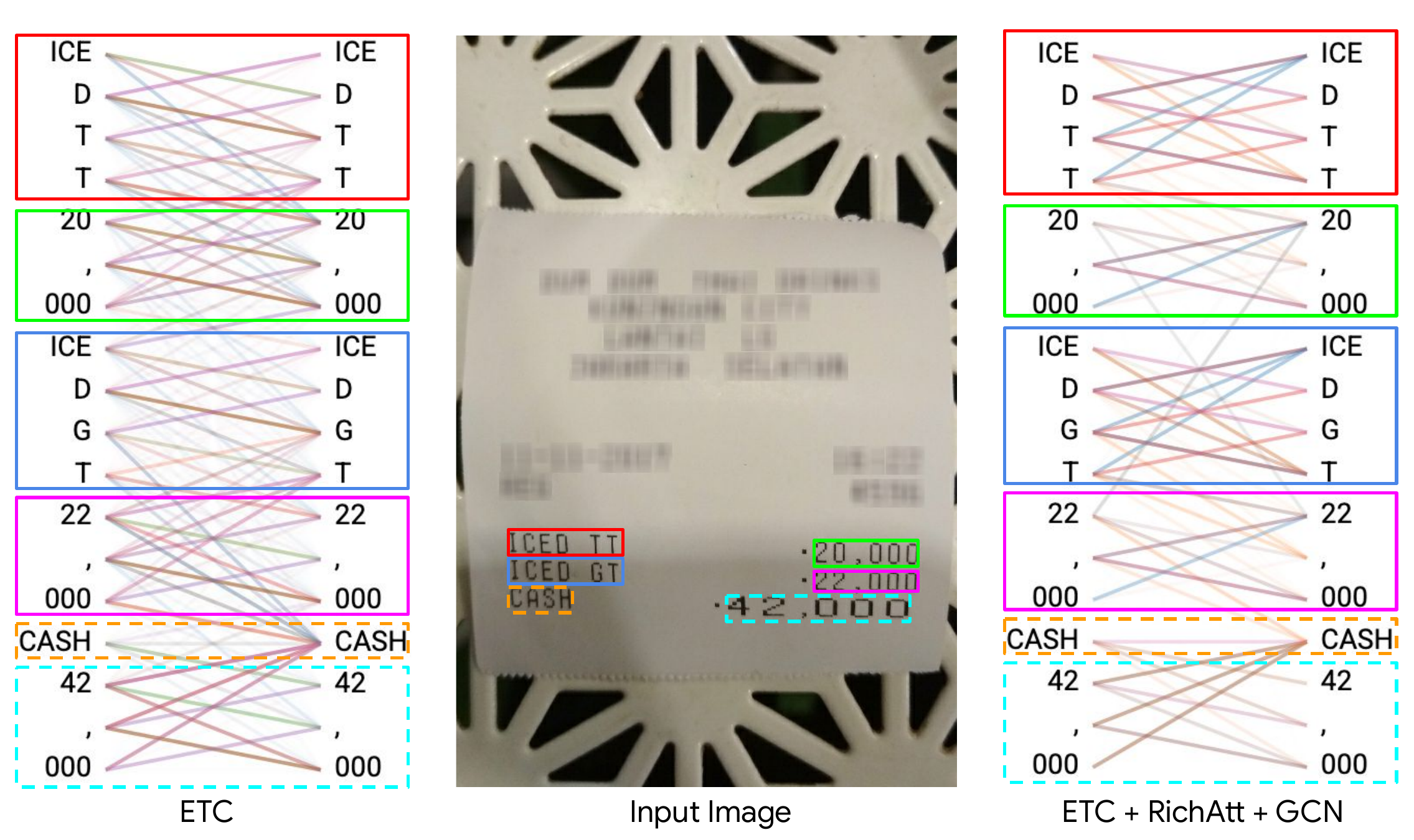}
    \vspace{-2mm}
    \caption{Examples of attention scores on CORD documents for ETC and ETC+RichAtt+GCN models. Unlike the ETC model, the ETC+RichAtt+GCN model makes tokens attend to other tokens within the same visual blocks, along with tokens aligned horizontally, thus strongly leveraging structural cues.}
    \label{fig:visualization2}
    \vspace{-2mm}
\end{figure*}

\begin{figure*}
\begin{minipage}{\textwidth}
\begin{theorem}\label{theorem: h given a}
If $\mathbf{h}_i,\mathbf{h}_j \mid a_{ij}$ are normally distributed, then $L(\mathbf{h}_i,\mathbf{h}_j \mid a_{ij}; \mu, \Sigma) \approx \mathbf{q}_i^\top\mathbf{k}_j + c_i$, where $\mathbf{q}_i = \text{affine}^{(q)}(\mathbf{h}_i)$ and $\mathbf{k}_j = \text{affine}^{(k)}(\mathbf{h}_j)$.
\end{theorem}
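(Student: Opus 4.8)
The plan is to substitute the multivariate‑Gaussian log‑density for $L(\mathbf{h}_i, \mathbf{h}_j\mid a_{ij})$ and expand the quadratic form along the block partition $\mathbf{z}=[\mathbf{h}_i;\mathbf{h}_j]$. Writing the precision matrix $\Lambda=\Sigma^{-1}$ in blocks $\Lambda_{ii},\Lambda_{ij},\Lambda_{jj}$ and $\mu=[\mu_i;\mu_j]$,
\[
L(\mathbf{h}_i,\mathbf{h}_j\mid a_{ij};\mu,\Sigma)=-\tfrac12(\mathbf{z}-\mu)^\top\Lambda(\mathbf{z}-\mu)-\tfrac12\ln\!\big((2\pi)^{d}|\Sigma|\big),
\]
and expanding the first term collects into four groups: a constant (the log‑normalizer together with $-\tfrac12\mu^\top\Lambda\mu$); a query‑only group $-\tfrac12\mathbf{h}_i^\top\Lambda_{ii}\mathbf{h}_i$ plus an $\mathbf{h}_i$‑linear piece; the bilinear cross term $-\mathbf{h}_i^\top\Lambda_{ij}\mathbf{h}_j$; and a key‑only group consisting of an $\mathbf{h}_j$‑linear piece and $-\tfrac12\mathbf{h}_j^\top\Lambda_{jj}\mathbf{h}_j$.

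Next I would do the bookkeeping. The constant and query‑only groups do not depend on the softmax index $j$ and are swept into the scalar $c_i$. For the cross term I would exhibit explicit affine maps: pick the linear parts $W^{(q)},W^{(k)}$ of $\text{affine}^{(q)},\text{affine}^{(k)}$ with $(W^{(q)})^\top W^{(k)}=-\Lambda_{ij}$ (realizable whenever $\operatorname{rank}\Lambda_{ij}$ does not exceed the per‑head width $d_k$, and otherwise matched in the least‑squares sense), and choose the bias of $\mathbf{q}_i$ so that the induced $b^{(q)\top}W^{(k)}\mathbf{h}_j$ term reproduces the $\mathbf{h}_j$‑linear piece of the key‑only group; the surplus $i$‑linear and constant pieces this produces are returned to $c_i$. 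After this, $\mathbf{q}_i^\top\mathbf{k}_j+c_i$ accounts for everything except the key‑only quadratic $-\tfrac12\mathbf{h}_j^\top\Lambda_{jj}\mathbf{h}_j$ -- this is exactly the ``one additional fancy bias term'' mentioned in the main text, and the $\approx$ in the statement records that we drop it (it varies with $j$ but not $i$, and is negligible when hidden states are approximately norm‑normalized, e.g.\ downstream of layer normalization, or when $\Lambda_{jj}$ is small).

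The step I expect to be the main obstacle is pinning down the sense of ``$\approx$'' in a way consistent with the ``fancy bias term'' remark: one must either argue that $-\tfrac12\mathbf{h}_j^\top\Lambda_{jj}\mathbf{h}_j$ is a genuine lower‑order correction that can be discarded, or keep it and state the result with an explicit key‑bias $b_j$. A secondary subtlety worth flagging is the rank gap: since $\text{affine}^{(q)}$ and $\text{affine}^{(k)}$ project the model dimension $d$ down to the head dimension $d_k<d$, the matrix $(W^{(q)})^\top W^{(k)}$ has rank at most $d_k$, so in general $\mathbf{q}_i^\top\mathbf{k}_j$ can only capture the best rank‑$d_k$ approximation of $-\mathbf{h}_i^\top\Lambda_{ij}\mathbf{h}_j$ -- a second, orthogonal reason the relation is an approximation rather than an equality. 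I would close by noting the degenerate check $\Lambda_{ij}=0$ (no structural coupling collapses the cross term) and the tie‑in to the additive‑bias / relative‑position reading discussed just before the theorem.
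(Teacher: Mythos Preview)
Your approach is correct and follows the same core strategy as the paper: expand the Gaussian log-density in block form along the $[\mathbf{h}_i;\mathbf{h}_j]$ partition, sweep the $i$-only pieces into $c_i$, identify the bilinear cross term with $\mathbf{q}_i^\top\mathbf{k}_j$, and drop the key-only quadratic as the source of the ``$\approx$''. The one substantive presentational difference is that the paper imposes a specific factorized structure on the precision matrix from the outset---writing the off-diagonal block as $W^{\top(q)}W^{(k)}$ and the key-key block as $W^{\top(k)}W^{(k)}$ with the \emph{same} $W^{(k)}$---so that the dropped term comes out cleanly as $-\tfrac12\mathbf{k}_j^\top\mathbf{k}_j$ (a function of the key vector itself) rather than your $-\tfrac12\mathbf{h}_j^\top\Lambda_{jj}\mathbf{h}_j$. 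This built-in factorization sidesteps your rank discussion entirely (the low-rank form is assumed, not fitted after the fact) and gives a tidier reading of the ``fancy bias''; your version is more general and more explicit about when the identification is only approximate. The paper also absorbs the $\mathbf{h}_j$-linear piece via mean-centering $(\mathbf{h}_j-\mathbf{b}^{(k)})$ rather than routing it through the query bias as you do, but that difference is purely cosmetic.
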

\begin{proof}
We partition the parameters of the normal distribution into blocks. Because the covariance matrix is positive definite, the ``key-key'' block must be positive definite as well, meaning it can be decomposed into a single matrix multiplied times its transpose. We expand the probability into the Gaussian probability density function\footnote{Let $\tau = 2\pi$.} and apply the natural logarithm -- canceling out the $\exp$ function -- and put the normalization constant in a separate $c$ term.
\begin{align*}
    \mu &= \left[\begin{matrix}
        \mathbf{b}^{(q)}\\
        \mathbf{b}^{(k)}
    \end{matrix}\right]\\
    \Sigma^{-1} &= \left[\begin{matrix}
        V & W^{\top(q)}W^{(k)}\\
        W^{\top(k)}W^{(q)} & W^{\top(k)}W^{(k)}
      \end{matrix}\right]\\
    P(\mathbf{h}_i, \mathbf{h}_j \mid a_{ij}; \mu, \Sigma) &= \frac{1}{\sqrt{\tau^{2d}|\Sigma|}}\exp\left( -\frac{1}{2}([\mathbf{h}_i;\mathbf{h}_j] - \mu)^\top\Sigma^{-1}([\mathbf{h}_i;\mathbf{h}_j] - \mu)\right)\\
    L(\mathbf{h}_i, \mathbf{h}_j \mid a_{ij}; \mu, \Sigma) &= c -\frac{1}{2}([\mathbf{h}_i;\mathbf{h}_j] - \mu)^\top\Sigma^{-1}([\mathbf{h}_i;\mathbf{h}_j] - \mu)
\end{align*}
From here, we distribute the bilinear transformation and simplify. The result is dot product attention with an ignoreable constant $c_i$ (because it later divides out of the softmax function) and an extra bias term composed from the ``key'' representation alone. We do not explore the effect of this newly-derived bias term in this work.
\begin{align*}
    L(\mathbf{h}_i, \mathbf{h}_j \mid a_{ij}) &= c -\frac{1}{2}\left(\mathbf{h}_i - \mathbf{b}^{(q)}\right)^\top V\left(\mathbf{h}_i - \mathbf{b}^{(q)}\right)\\
        &\qquad + \left(\mathbf{h}_i - \mathbf{b}^{(q)}\right)^\top W^{\top(q)}W^{(k)}\left(\mathbf{h}_j - \mathbf{b}^{(k)}\right)\\
        &\qquad -\frac{1}{2}\left(\mathbf{h}_j - \mathbf{b}^{(k)}\right)^\top W^{\top(k)}W^{(k)}\left(\mathbf{h}_j - \mathbf{b}^{(k)}\right)\\
    &= c_i + \left(\mathbf{h}_i - \mathbf{b}^{(q)}\right)^\top W^{\top(q)}W^{(k)}\left(\mathbf{h}_j - \mathbf{b}^{(k)}\right)\\
        &\qquad -\frac{1}{2}\left(\mathbf{h}_j - \mathbf{b}^{(k)}\right)^\top W^{\top(k)}W^{(k)}\left(\mathbf{h}_j - \mathbf{b}^{(k)}\right)\\
    &= c_i + \left(W^{(q)}\mathbf{h}_i + W^{(q)}\mathbf{b}^{(q)}\right)^\top\left(W^{(k)}\mathbf{h}_j + W^{(k)}\mathbf{b}^{(k)}\right)\\
        &\qquad - \frac{1}{2}\left(W^{(k)}\mathbf{h}_j + W^{(k)}\mathbf{b}^{(k)}\right)^\top\left(W^{(k)}\mathbf{h}_j + W^{(k)}\mathbf{b}^{(k)}\right)\\
    \mathbf{q}_i &= \text{affine}^{(q)}(\mathbf{h}_i)\\
    \mathbf{k}_j &= \text{affine}^{(k)}(\mathbf{h}_j)\\
    L(\mathbf{h}_i, \mathbf{h}_j \mid a_{ij}) &= c_i + \mathbf{q}_i^\top\mathbf{k}_j -\frac{1}{2}\mathbf{k}_j^\top\mathbf{k}_j\\
    &\approx c_i + \mathbf{q}_i^\top\mathbf{k}_j
\end{align*}
\end{proof}
\end{minipage}
\end{figure*}

\begin{figure*}
\begin{lemma}\label{lemma: x given f}
If $\mathbf{x} \mid f$ is normally distributed, and $f$ is categorical, then $L(\mathbf{x} \mid f = y; \mu_y, \Sigma_y) = \text{biaffine}_y(\mathbf{x})$.
\end{lemma}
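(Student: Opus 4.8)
The plan is to mirror the proof of Theorem \ref{theorem: h given a}, but now tracking the dependence on the categorical value $y$ instead of discarding it. First I would write down the Gaussian density for $\mathbf{x} \mid f = y$ with its own class-conditional mean $\mu_y$ and covariance $\Sigma_y$, take logarithms to cancel the $\exp$, and absorb the normalization factor $-\tfrac{1}{2}\ln(\tau^{d}|\Sigma_y|)$ into a scalar $c_y$ that depends only on $y$. This yields
\begin{small}
\begin{equation*}
    L(\mathbf{x}\mid f = y;\mu_y,\Sigma_y) = c_y - \tfrac{1}{2}(\mathbf{x}-\mu_y)^\top\Sigma_y^{-1}(\mathbf{x}-\mu_y).
\end{equation*}
\end{small}

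Next I would expand the quadratic form. Writing $A_y = -\tfrac{1}{2}\Sigma_y^{-1}$ (a symmetric matrix depending on $y$), $\mathbf{b}_y = \Sigma_y^{-1}\mu_y$ (a vector depending on $y$), and rolling the constant $-\tfrac{1}{2}\mu_y^\top\Sigma_y^{-1}\mu_y$ into $c_y$, the expression becomes $\mathbf{x}^\top A_y \mathbf{x} + \mathbf{b}_y^\top\mathbf{x} + c_y$. The point is that this is exactly the definition of a biaffine map in the single argument $\mathbf{x}$ whose parameters are indexed by $y$: the quadratic term $\mathbf{x}^\top A_y\mathbf{x}$ is the ``bi'' part, the linear term $\mathbf{b}_y^\top\mathbf{x}$ and constant $c_y$ the affine part. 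Hence $L(\mathbf{x}\mid f=y) = \text{biaffine}_y(\mathbf{x})$, as claimed. I would note in passing that, since $\Sigma_y$ is positive definite, $A_y$ is negative definite — a detail that matters for the probabilistic interpretation but, as in the earlier footnote about $B$, is inconsequential for what the network can represent.

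The only real subtlety is bookkeeping: one must be careful that every term which does \emph{not} involve $\mathbf{x}$ — including the class-dependent log-determinant and the $\mu_y^\top\Sigma_y^{-1}\mu_y$ term — is collected into $c_y$ and that $c_y$ is genuinely allowed to depend on $y$, unlike in Theorem \ref{theorem: h given a} where the analogous constant could be dropped because it divided out of a softmax taken over a \emph{fixed} conditioning event. Here $f$ is the variable being reasoned about, so the $y$-dependence of the constant is essential and must be retained as part of the biaffine form. With that caveat observed, the derivation is a routine completion-of-the-square argument and I would not expect any obstacle beyond keeping the indices straight.
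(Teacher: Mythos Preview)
Your proposal is correct and follows essentially the same route as the paper: write the class-conditional Gaussian density, take the log, expand the quadratic form, and read off the quadratic, linear, and constant pieces as the components of $\text{biaffine}_y(\mathbf{x})$, noting that the quadratic coefficient $-\tfrac{1}{2}\Sigma_y^{-1}$ is negative definite. Your extra remark about why the $y$-dependent constant must be retained here (versus Theorem~\ref{theorem: h given a}) is a nice clarification but does not change the argument.
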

\begin{proof}
This can be shown by simply expanding out the probability density function for the normal distribution -- with parameters specific to the value $f$ takes -- and simplifying. The matrix $V_y$ in the result must be negative definite, but this is of little consequence for what follows.
\begin{align*}
    P(\mathbf{x} \mid f = y; \mu_y, \Sigma_y) &= \frac{1}{\sqrt{\tau^d|\Sigma_y|}}\exp\left(-\frac{1}{2}(\mathbf{x} - \mu_y)^\top\Sigma_y^{-1}(\mathbf{x} - \mu_y)\right)\\
    L(\mathbf{x} \mid f) &= \mathbf{x}^\top\left(-\frac{1}{2}\Sigma_y^{-1}\right)\mathbf{x} + \left(\mu_y^\top\Sigma_y^{-1}\right)\mathbf{x} + \left(-\frac{1}{2}\mu_y^\top\Sigma_y^{-1}\mu_y - \frac{1}{2}\ln\big(\tau^d |\Sigma_y|\big)\right)\\
    &= \mathbf{x}^\top V_y \mathbf{x} + \mathbf{w}_y^\top\mathbf{x} + b_y\\
    &= \text{biaffine}_y(\mathbf{x})
\end{align*}
\end{proof}
\end{figure*}

\begin{figure*}
\begin{lemma}\label{lemma: f given x}
If $f \mid \mathbf{x}$ is Bernoulli-distributed and $\mathbf{x} \mid f$ is normally distributed, then $P(f \mid \mathbf{x}; p, \mu_0, \mu_1, \Sigma_0, \Sigma_1) = \text{sigmoid}(\text{biaffine}(\mathbf{x}))$, and if $\Sigma_0 = \Sigma_1$, then $ = \text{sigmoid}(\text{affine}(\mathbf{x}))$.
\end{lemma}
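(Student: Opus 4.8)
The plan is to apply Bayes' rule to turn the posterior over the binary variable $f$ into a two-class softmax, observe that a two-class softmax is exactly a sigmoid of the log-odds, and then invoke Lemma~\ref{lemma: x given f} to identify that log-odds as a biaffine (resp.\ affine) function of $\mathbf{x}$.

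First I would write, using the Bernoulli prior $P(f=1) = p$, $P(f=0) = 1-p$,
\[
P(f = 1 \mid \mathbf{x}) = \frac{P(\mathbf{x}\mid f=1)\,p}{P(\mathbf{x}\mid f=1)\,p + P(\mathbf{x}\mid f=0)\,(1-p)}.
\]
Dividing numerator and denominator by the numerator puts this in the form $1/(1+e^{-z}) = \text{sigmoid}(z)$ with
\[
z = \big(L(\mathbf{x}\mid f=1) - L(\mathbf{x}\mid f=0)\big) + \ln\tfrac{p}{1-p},
\]
again writing $L(X) = \ln P(X)$. This is the same "$x=\exp(\ln x)$ inside a normalized sum" manoeuvre used to obtain Eq.~(\ref{eq:richatt}), specialized to two categories, so that the softmax collapses to a sigmoid.

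Next I would substitute the Gaussian class-conditionals. By Lemma~\ref{lemma: x given f}, each $L(\mathbf{x}\mid f=y) = \mathbf{x}^\top V_y\mathbf{x} + \mathbf{w}_y^\top\mathbf{x} + b_y = \text{biaffine}_y(\mathbf{x})$ with $V_y = -\tfrac12\Sigma_y^{-1}$. Since biaffine functions are closed under subtraction and under adding a constant, $z = \mathbf{x}^\top(V_1-V_0)\mathbf{x} + (\mathbf{w}_1-\mathbf{w}_0)^\top\mathbf{x} + (b_1-b_0+\ln\tfrac{p}{1-p})$ is itself biaffine in $\mathbf{x}$, giving $P(f=1\mid\mathbf{x}) = \text{sigmoid}(\text{biaffine}(\mathbf{x}))$. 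In the tied-covariance case $\Sigma_0=\Sigma_1$, the quadratic coefficients coincide, $V_1-V_0 = -\tfrac12(\Sigma_1^{-1}-\Sigma_0^{-1}) = 0$, so the quadratic term of $z$ drops out and $z$ is affine — hence $P(f=1\mid\mathbf{x}) = \text{sigmoid}(\text{affine}(\mathbf{x}))$.

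There is no real obstacle here; the only things to be careful about are bookkeeping — tracking which Gaussian normalization constants get absorbed into $b_y$, and noting (as already remarked for Lemma~\ref{lemma: x given f}) that the definiteness of the $V_y$ is immaterial once they are differenced — and making explicit that the two-class softmax is a sigmoid, which is what gives the conclusion its name. This lemma is exactly what justifies Eq.~(\ref{ideal order}): there $p_{ij}$ is the posterior probability of a latent binary order feature given the (reduced-rank) token representations, which by the tied-covariance case is the sigmoid of an affine function of $[\mathbf{h}^\ell_i;\mathbf{h}^\ell_j]$.
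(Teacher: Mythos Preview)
Your proposal is correct and follows essentially the same route as the paper's own proof: Bayes' rule to turn the posterior into a two-class softmax, recognition that this is a sigmoid of the log-odds, invocation of Lemma~\ref{lemma: x given f} to write each class-conditional log-likelihood as $\text{biaffine}_y(\mathbf{x})$, and then the observation that in the tied-covariance case the quadratic parts $V_y = -\tfrac12\Sigma_y^{-1}$ cancel in the difference. Your bookkeeping is in fact cleaner on signs than the paper's displayed derivation, and your closing remark tying the tied-covariance case to Eq.~(\ref{ideal order}) is apt.
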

\begin{proof}
We begin by applying Bayes' rule and exponentiating by the log in order to express the probability in terms of the sigmoid function. Then we apply Lemma \ref{lemma: x given f} to expand out the log-likelihood terms, and we use the Bernoulli probability mass function to expand out the log-prior term. This results in the sum of multiple biaffine and constant terms, which is equivalent to a single biaffine function.
\begin{align*}
    P(f \mid \mathbf{x}; \mu_0, \mu_1, \Sigma_0, \Sigma_1) &= \frac{P(\mathbf{x} \mid f; \mu_1, \Sigma_1)P(f; p)}{P(\mathbf{x} \mid f; \mu_1, \Sigma_1)P(f; p) + P(\mathbf{x} \mid \neg f; \mu_0, \Sigma_0)P(\neg f; 1 - p))}\\
    &=\frac{\exp\left(L(\mathbf{x} \mid f) + L(f)\right)}{\exp\left(L(\mathbf{x} \mid f) + L(f)\right) + \exp\left(L(\mathbf{x} \mid \neg f) + L(\neg f)\right)}\\
    &= \text{sigmoid}(L(\mathbf{x} \mid \neg f) + L(\neg f) - (L(\mathbf{x} \mid f) + L(f)))\\
    &=\text{sigmoid}\left(\text{biaffine}_0(\mathbf{x}) + \ln(1-p) - (\text{biaffine}_1 + \ln(p))\right)\\
    &=\text{sigmoid}\left(\text{biaffine}(\mathbf{x})\right)
\end{align*}
Recall from Lemma \ref{lemma: x given f} that the bilinear term $V_y$ of the biaffine function is just $-\frac{1}{2}\Sigma_y^{-1}$, independent of $\mu_y$. Therefore if $\Sigma_0 = \Sigma_1$, then $V_0 - V_1 = 0$, and the two bilinear terms cancel out when simplifying $\text{biaffine}_0 - \text{biaffine}_1$. Thus in this context, the biaffine function reduces to an affine function.
\begin{align*}
    P(f \mid \mathbf{x}; p, \mu_0, \mu_1, \Sigma_0, \Sigma_1) &= \text{sigmoid}\left(\text{affine}(\mathbf{x})\right) & \text{ if }\Sigma_0 = \Sigma_1
\end{align*}
\end{proof}
\end{figure*}

\begin{figure*}
\begin{theorem}\label{theorem: binary f}
If $f_{ij} \mid \mathbf{h}_i, \mathbf{h}_j, a_{ij}$ is Bernoulli-distributed, and $\mathbf{h}_i, \mathbf{h}_j \mid f_{ij}, a_{ij}$ is normally distributed, and $\Sigma_0 = \Sigma_1$ (as in Lemma \ref{lemma: f given x}), then $L(f_{ij} = y \mid \mathbf{h}_i, \mathbf{h}_j, a_{ij}; \theta) = y\ln(\text{sigmoid}(\text{affine}([\mathbf{h}_i; \mathbf{h}_j]))) + (1 - y)\ln(1 - \text{sigmoid}(\text{affine}([\mathbf{h}_i; \mathbf{h}_j])))$.
\end{theorem}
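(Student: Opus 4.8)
The plan is to recognize that this theorem is, at bottom, just the definition of the Bernoulli probability mass function combined with Lemma~\ref{lemma: f given x}, applied with the composite vector $\mathbf{x} = [\mathbf{h}_i; \mathbf{h}_j]$ and with $a_{ij}$ held fixed at $1$ throughout. So the proof splits into two short moves: turning the Bernoulli assumption on $f_{ij}$ into the log-pmf expression $y\ln(p_{ij}) + (1-y)\ln(1-p_{ij})$, and then identifying $p_{ij}$ with $\texttt{sigmoid}(\text{affine}([\mathbf{h}_i;\mathbf{h}_j]))$ via the lemma.

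First I would fix $a_{ij} = 1$ and treat it as a constant conditioning event (legitimate, since as noted in the appendix preamble only the case $a_{ij}=1$ matters), so every quantity below is implicitly conditioned on it. Set $p_{ij} := P(f_{ij} = 1 \mid \mathbf{h}_i, \mathbf{h}_j, a_{ij})$. Because $f_{ij} \mid \mathbf{h}_i, \mathbf{h}_j, a_{ij}$ is Bernoulli by hypothesis, its pmf is $P(f_{ij} = y \mid \mathbf{h}_i, \mathbf{h}_j, a_{ij}) = p_{ij}^{\,y}(1 - p_{ij})^{1-y}$ for $y \in \{0,1\}$, and taking $\ln$ gives $L(f_{ij} = y \mid \mathbf{h}_i, \mathbf{h}_j, a_{ij}) = y\ln(p_{ij}) + (1-y)\ln(1-p_{ij})$. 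This already has the claimed shape once $p_{ij}$ is shown to be $\texttt{sigmoid}(\text{affine}([\mathbf{h}_i;\mathbf{h}_j]))$; note it is also exactly Eq.~(\ref{order score}) with $p_{ij}$ as in Eq.~(\ref{ideal order}) and $o_{ij}$ in the role of $y$.

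For the remaining identification, I would invoke Lemma~\ref{lemma: f given x} under the substitution $\mathbf{x} \leftarrow [\mathbf{h}_i;\mathbf{h}_j]$, $f \leftarrow f_{ij}$. The lemma's hypotheses — $f \mid \mathbf{x}$ Bernoulli and $\mathbf{x} \mid f$ normal — are precisely the two distributional assumptions of the theorem (with $a_{ij}=1$ again a silent conditioning event), and the additional hypothesis $\Sigma_0 = \Sigma_1$ is assumed. The lemma then delivers $P(f_{ij}=1 \mid \mathbf{h}_i,\mathbf{h}_j,a_{ij}) = \texttt{sigmoid}(\text{affine}([\mathbf{h}_i;\mathbf{h}_j]))$, i.e. $p_{ij}$ has exactly the desired parametric form; substituting this back into the log-pmf expression from the previous paragraph finishes the proof.

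The only genuine obstacle is bookkeeping around the conditioning on $a_{ij}$: one must verify that carrying the fixed event $a_{ij}=1$ through every conditional leaves the shape of each distribution unchanged, so that Lemma~\ref{lemma: f given x} applies verbatim with $\mathbf{x} = [\mathbf{h}_i;\mathbf{h}_j]$ rather than a bare $\mathbf{x}$. Since conditioning on a fixed event only reindexes the family of distributions, this is immediate, and everything else is direct substitution, so I would not grind through any algebra beyond what Lemma~\ref{lemma: f given x} already supplies.
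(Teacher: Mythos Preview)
Your proposal is correct and follows essentially the same approach as the paper: write the Bernoulli pmf as $p_{ij}^{\,y}(1-p_{ij})^{1-y}$, invoke Lemma~\ref{lemma: f given x} with $\mathbf{x}=[\mathbf{h}_i;\mathbf{h}_j]$ (and $\Sigma_0=\Sigma_1$) to identify $p_{ij}=\text{sigmoid}(\text{affine}([\mathbf{h}_i;\mathbf{h}_j]))$, then take logs. The only cosmetic difference is ordering (you take the log before substituting the lemma, the paper substitutes first), and your explicit remark about $a_{ij}$ being a silent conditioning event is a helpful clarification the paper leaves implicit.
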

\begin{proof}
Because $f_{ij}$ is binary, we begin by expressing the probability mass function in terms of both $f_{ij}$ and $\neg f_{ij}$. Then we apply Theorem \ref{lemma: f given x} to replace the abstract probability term with a fully-specified parametric function. Finally, the natural log can be applied and simplified straightforwardly.
\begin{align*}
    P(f_{ij} = y \mid \mathbf{h}_i, \mathbf{h}_j, a_{ij}; \theta) &= P(f_{ij} \mid \mathbf{h}_i, \mathbf{h}_j, a_{ij})^y(1 - P(f_{ij} \mid \mathbf{h}_i, \mathbf{h}_j, a_{ij}))^{1 - y}\\
    &= \text{sigmoid}(\text{affine}([\mathbf{h}_i; \mathbf{h}_j]))^y(1 - \text{sigmoid}(\text{affine}([\mathbf{h}_i; \mathbf{h}_j])))^{1 - y}\\
    L(f_{ij} = y \mid \mathbf{h}_i, \mathbf{h}_j, a_{ij}; \theta)&= y\ln(\text{sigmoid}(\text{affine}([\mathbf{h}_i; \mathbf{h}_j])))\\
        &\qquad+ (1 - y)\ln(1 - \text{sigmoid}(\text{affine}([\mathbf{h}_i; \mathbf{h}_j])))
\end{align*}
\end{proof}
\end{figure*}

\begin{figure*}
\begin{lemma}\label{lemma: log-normal}
The log-normal probability density function can be written as $\frac{1}{\sqrt{\tau\sigma^2\exp(\sigma^2)}}\exp\left(-\frac{(\ln(x) - \mu')^2}{2\sigma^2} - \mu'\right)$, where $\mu' = \mu - \sigma^2$.
\end{lemma}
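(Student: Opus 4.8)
The plan is to start from the textbook form of the log-normal density, namely $f(x) = \frac{1}{x\sigma\sqrt{\tau}}\exp\!\left(-\frac{(\ln x - \mu)^2}{2\sigma^2}\right)$ with $\tau = 2\pi$, and transform it into the claimed expression by pure algebra. The one slightly clever move is to rewrite the leading factor $1/x$ as $\exp(-\ln x)$, so that it can be pulled inside the exponential and absorbed via a completing-the-square argument. After that step, everything reduces to bookkeeping in the single variable $u = \ln x$.

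Concretely, I would first merge the two exponential factors to obtain an exponent of the form $-u - \frac{(u-\mu)^2}{2\sigma^2}$, then clear denominators and expand to reach $-\frac{u^2 - 2(\mu - \sigma^2)u + \mu^2}{2\sigma^2}$, where the linear coefficient has picked up an extra $-2\sigma^2 u$ from the $-u$ term. Setting $\mu' = \mu - \sigma^2$ and completing the square gives $-\frac{(u-\mu')^2}{2\sigma^2} + \frac{\mu'^2 - \mu^2}{2\sigma^2}$. Expanding $\mu'^2 - \mu^2 = \sigma^4 - 2\mu\sigma^2$ shows the residual constant equals $\frac{\sigma^2}{2} - \mu = -\mu' - \frac{\sigma^2}{2}$, so the exponent becomes exactly $-\frac{(\ln x - \mu')^2}{2\sigma^2} - \mu' - \frac{\sigma^2}{2}$.

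It then remains to fold the stray constant $-\frac{\sigma^2}{2}$ back into the normalization factor: $\frac{1}{\sigma\sqrt{\tau}}\exp\!\left(-\frac{\sigma^2}{2}\right) = \frac{1}{\sqrt{\tau\sigma^2}}\cdot\frac{1}{\sqrt{\exp(\sigma^2)}} = \frac{1}{\sqrt{\tau\sigma^2\exp(\sigma^2)}}$, which is precisely the prefactor in the statement. Combining this with the exponent computed above yields the claimed identity. There is no genuine obstacle here — the lemma is a routine manipulation — and the only point that requires care is tracking the additive constant that emerges from completing the square (it is easy to mis-sign $\mu'^2 - \mu^2$ or drop a term), together with checking that this constant recombines with $1/(x\sigma\sqrt{\tau})$ to give exactly $1/\sqrt{\tau\sigma^2\exp(\sigma^2)}$ and not something off by a factor of $\exp(\sigma^2/2)$.
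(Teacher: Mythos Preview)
Your proposal is correct and follows essentially the same route as the paper: absorb $1/x$ into the exponent as $-\ln x$, complete the square to shift $\mu$ to $\mu' = \mu - \sigma^2$, and then fold the leftover $-\sigma^2/2$ into the normalizing constant to produce $1/\sqrt{\tau\sigma^2\exp(\sigma^2)}$. The algebra and the bookkeeping of the residual constant match the paper's derivation step for step.
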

\begin{proof}
This can be shown through basic algebra.
\begin{align*}
    \frac{1}{x\sqrt{\tau\sigma^2}}\exp\left(-\frac{(\ln(x) - \mu)^2}{2\sigma^2}\right) &= \frac{1}{\sqrt{\tau\sigma^2}}\exp\left(-\frac{(\ln(x) - \mu)^2}{2\sigma^2} - \ln(x)\right)\\
    &= \frac{1}{\sqrt{\tau\sigma^2}}\exp\left(-\frac{(\ln(x) - \mu)^2}{2\sigma^2} - \frac{2\sigma^2\ln(x)}{2\sigma^2}\right)\\
    &= \frac{1}{\sqrt{\tau\sigma^2}}\exp\left(-\frac{\ln(x)^2 - 2(\mu - \sigma^2)\ln(x) + \mu^2}{2\sigma^2}\right)\\
    &= \frac{1}{\sqrt{\tau\sigma^2}}\exp\left(-\frac{(\ln(x) - (\mu - \sigma^2))^2 + \sigma^2(2\mu - \sigma^2)}{2\sigma^2}\right)\\
    &= \frac{1}{\sqrt{\tau\sigma^2}}\exp\left(-\frac{(\ln(x) - (\mu - \sigma^2))^2}{2\sigma^2} - \mu + \frac{\sigma^2}{2}\right)\\
    &= \frac{1}{\sqrt{\tau\sigma^2}}\exp\left(-\frac{(\ln(x) - \mu')^2}{2\sigma^2} - \mu' - \frac{\sigma^2}{2}\right)\\
    &= \frac{1}{\sqrt{\tau\sigma^2\exp(\sigma^2)}}\exp\left(-\frac{(\ln(x) - \mu')^2}{2\sigma^2} - \mu'\right)
\end{align*}
\end{proof}
\end{figure*}

\begin{figure*}
\begin{theorem}\label{theorem: exp f}
If $\ln(f_{ij}), \mathbf{h}_i, \mathbf{h}_j \mid a_{ij}$ is normally distributed, then $L(f_{ij} = z \mid \mathbf{h}_i, \mathbf{h}_j, a_{ij}; \Sigma, \mu) = -\theta^2(\ln(z) - \text{affine}([\mathbf{h}_i; \mathbf{h}_j]))^2 / 2 - \text{affine}([\mathbf{h}_i; \mathbf{h}_j])$.
\end{theorem}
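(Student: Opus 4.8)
The plan is to reduce the statement to the log-normal density formula of Lemma~\ref{lemma: log-normal}, after first showing that the conditional $\ln(f_{ij}) \mid \mathbf{h}_i, \mathbf{h}_j, a_{ij}$ is itself normal with an affine mean and a \emph{constant} variance. For this first step I would use the standard Gaussian-conditioning fact, proved exactly the way Theorem~\ref{theorem: h given a} handles its block structure: partition the precision matrix $\Sigma^{-1}$ of the jointly normal vector $(\ln(f_{ij}); \mathbf{h}_i; \mathbf{h}_j) \mid a_{ij}$ so that the scalar $\ln(f_{ij})$ coordinate is separated from the block $[\mathbf{h}_i; \mathbf{h}_j]$. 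Equivalently, write $L(\ln(f_{ij}), \mathbf{h}_i, \mathbf{h}_j \mid a_{ij}) = L(\ln(f_{ij}) \mid \mathbf{h}_i, \mathbf{h}_j, a_{ij}) + L(\mathbf{h}_i, \mathbf{h}_j \mid a_{ij})$ and observe that, viewed as a function of $\ln(f_{ij})$, the joint log-density is quadratic with leading coefficient the fixed scalar $-\tfrac{1}{2}(\Sigma^{-1})_{ff}$ and linear coefficient affine in $[\mathbf{h}_i; \mathbf{h}_j]$; completing the square therefore puts it in normal form with scale $\sigma^2 = 1/(\Sigma^{-1})_{ff}$ (a constant, independent of $\mathbf{h}_i, \mathbf{h}_j$) and location $\mu = \text{affine}([\mathbf{h}_i; \mathbf{h}_j])$. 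Hence $f_{ij} \mid \mathbf{h}_i, \mathbf{h}_j, a_{ij}$ is log-normal with those parameters.

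Next I would substitute the log-normal density in the reparametrized form of Lemma~\ref{lemma: log-normal}, namely $\frac{1}{\sqrt{\tau\sigma^2\exp(\sigma^2)}}\exp\!\left(-\frac{(\ln(z) - \mu')^2}{2\sigma^2} - \mu'\right)$ with $\mu' = \mu - \sigma^2$; since $\mu$ is affine in $[\mathbf{h}_i; \mathbf{h}_j]$ and $\sigma^2$ is constant, $\mu'$ is again an affine function of $[\mathbf{h}_i; \mathbf{h}_j]$. Applying the natural logarithm cancels the outer $\exp$ and gives $L(f_{ij} = z \mid \mathbf{h}_i, \mathbf{h}_j, a_{ij}) = -\tfrac{1}{2}\ln(\tau\sigma^2\exp(\sigma^2)) - \frac{(\ln(z) - \mu')^2}{2\sigma^2} - \mu'$. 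Setting $\theta^2 = 1/\sigma^2$ (the per-head temperature) and dropping the leading additive constant — which, as noted throughout these derivations, divides out of the softmax and so is inconsequential, just as $c_i$ was dropped in Theorem~\ref{theorem: h given a} — yields precisely $-\theta^2(\ln(z) - \text{affine}([\mathbf{h}_i; \mathbf{h}_j]))^2/2 - \text{affine}([\mathbf{h}_i; \mathbf{h}_j])$, with both copies of $\text{affine}$ equal to $\mu'$. To connect with Eq.~(\ref{distance score}) in the main text, one takes $z = f_{ij}$ to be the distance feature, writes $\mu_{ij}$ for $\mu'$, and recognizes the residual $-\mu'$ as the extra bias term analogous to the $-\tfrac{1}{2}\mathbf{k}_j^\top\mathbf{k}_j$ term the paper chooses not to explore.

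I expect no deep obstacle: the argument is bookkeeping parallel to Theorem~\ref{theorem: h given a}. The one point needing care is the claim that the conditional variance is genuinely constant — this is exactly where full joint Gaussianity of $(\ln(f_{ij}), \mathbf{h}_i, \mathbf{h}_j)$ (not merely marginal log-normality of $f_{ij}$) is used, since it is the structure of the joint precision matrix that forces the conditional precision of the $\ln(f_{ij})$ coordinate to be a fixed scalar. A secondary bit of care is tracking which quantities are truly constant (absorbable into the softmax normalizer) versus affine in $[\mathbf{h}_i; \mathbf{h}_j]$ (which must be kept), especially when folding $\sigma^2$ into the definitions of $\mu'$ and of $\theta$.
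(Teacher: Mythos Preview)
Your proposal is correct and follows essentially the same route as the paper: establish that the conditional $\ln(f_{ij}) \mid \mathbf{h}_{ij}, a_{ij}$ is normal with affine mean and constant variance, feed that into Lemma~\ref{lemma: log-normal}, take logs, and identify $\theta^2 = 1/\sigma^2$. The only cosmetic difference is that you derive the conditional by partitioning the \emph{precision} matrix and completing the square, whereas the paper partitions the \emph{covariance} matrix and invokes the textbook conditional-Gaussian formula (citing Eaton); both yield the same affine mean and the same constant $\sigma^{2\prime}$, so the arguments are interchangeable.
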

\begin{proof}
For convenience and brevity, we stack $\mathbf{h}_i, \mathbf{h}_j$ into one vector $\mathbf{h}_{ij}$. As usual, we apply the probability density function of the assumed probability distribution. Note that here we begin with the \emph{joint} normal distribution; this allows us to avoid complexities arising from mixing normally- and lognormally-distributed variables. 
\begin{align*}
    \mathbf{h}_{ij} &= [\mathbf{h}_i; \mathbf{h}_j]\\
    P(\ln(f_{ij}) = z, \mathbf{h}_{ij} \mid a_{ij}; \mu, \Sigma) &= \frac{1}{\sqrt{\tau^{2d+1}|\Sigma|}}\exp\left(([z; \mathbf{h}_{ij}] - \mu)^\top\Sigma^{-1}([z; \mathbf{h}_{ij}] - \mu)\right)
\end{align*}
Similar to Theorem \ref{theorem: h given a}, we partition the parameters of the normal distribution into one section for the log-normal feature $\ln(f)$ and one section for the normal features $\mathbf{h}_{ij}$. Then we apply the definition of a \emph{conditional} normal distribution as described by \citet{eaton1983multivariate} to get the distribution of the new feature conditioned on $\mathbf{h}_{ij}$. 
\begin{align*}
    \mu &= \left[\begin{matrix}
        b^{(f)}\\
        \mathbf{b}^{(h)}
    \end{matrix}\right]\\
    \Sigma &= \left[\begin{matrix}
        w^{(ff)} & \mathbf{w}^{\top(hf)}\\
        \mathbf{w}^{(hf)} & W^{(hh)}
    \end{matrix}\right]\\
    \mu' &= b^{(f)} + \mathbf{w}^{\top(hf)}W^{-1(hh)}(\mathbf{h}_{ij} - \mathbf{b}^{(h)})\\
    \sigma^{2\prime} &= w^{(ff)} - \mathbf{w}^{\top(hf)}W^{-1(hh)}\mathbf{w}^{(hf)}\\
    P(\ln(f_{ij}) = z \mid \mathbf{h}_{ij}, a_{ij}; \mu', \sigma^{2\prime}) &= \frac{1}{\sqrt{\tau\sigma^{2\prime}}}\exp\left(-\frac{(z - \mu')^2}{2\sigma^{2\prime}}\right)
\end{align*}
We convert the normal distribution over $\ln(f_{ij})$ into a log-normal distribution over $f_{ij}$ in the convenient form derived in Lemma \ref{lemma: log-normal}, and simplify its log-probability. Noting that $\mu''$ is ultimately an affine function of $\mathbf{h}_{ij}$, whereas $\sigma^{2\prime}$ is composed exclusively from free parameters, we replace the former with an affine function and the latter with a constant $1 / \theta^2$ (for better numerical stability under gradient-based optimization).
\begin{align*}
    P(f_{ij} = z \mid \mathbf{h}_{ij}, a_{ij}; \mu'', \sigma^{2\prime}) &= \frac{1}{\sqrt{\tau\sigma^{2\prime}\exp(\sigma^{2\prime})}}\exp\left(-\frac{(\ln(z) - \mu'')^2}{2\sigma^{2\prime}} - \mu''\right)\\
    L(f_{ij} = z \mid \mathbf{h}_{ij}, a_{ij}; \mu'', \sigma^{2\prime}) &= c - \frac{(\ln(z) - \mu'')^2}{2\sigma^{2\prime}} - \mu''\\
    &= c -\frac{\theta^2(\ln(z) - \text{affine}(\mathbf{h}_{ij}))^2}{2} - \text{affine}(\mathbf{h}_{ij})
\end{align*}
Note that when implementing attention in a neural network, the second instance of the affine term can be absorbed into the affine components of dot-product attention and ignored.
\end{proof}
\end{figure*}

\end{document}